\documentclass{article} 
\usepackage{nips15submit_e,times}
\usepackage{hyperref}
\usepackage{url}

\usepackage[square,numbers]{natbib}
\usepackage{amsmath}
\usepackage{amstext}
\usepackage{amssymb}
\usepackage{amsthm}
\usepackage{mathtools}

\usepackage{graphicx}
\usepackage{placeins}

\newcommand{\ev}{\mathbb{E}}

\newtheorem{definition}{Definition}
\newtheorem{lemma}{Lemma}
\newtheorem{Theorem}{Theorem}

\newtheorem{test}{Test}
\newtheorem{proposition}{Proposition}

\title{Fast Two-Sample Testing with Analytic Representations of Probability Measures}

\author{
Kacper Chwialkowski\\
Computer Science Department, Gatsby Computational Neuroscience Unit\\
University College London, \\
\texttt{kacper.chwialkowski@gmail.com} \\
\And
Aaditya Ramdas \\
Machine Learning and Statistics School of Computer Science \\
Carnegie Mellon University \\
\texttt{aramdas@cs.cmu.edu} \\
\And
Dino Sejdinovic \\
Department of Statistics \\
University of Oxford \\
\texttt{dino.sejdinovic@gmail.com} \\
\And
Arthur Gretton\\
Gatsby Computational Neuroscience Unit \\
University College London \\
\texttt{arthur.gretton@gmail.com} \\
}

%

\nipsfinalcopy 

\begin{document}
\setlength{\bibsep}{0pt plus 0.5ex}

\maketitle

\vspace{-2mm}
\begin{abstract}
We propose a class of nonparametric two-sample tests with a cost linear in the sample size. Two tests are given, both  based on an ensemble of distances between analytic functions representing each of the distributions. The first test uses smoothed empirical characteristic functions to represent the distributions, the second uses distribution embeddings in a reproducing kernel Hilbert space. Analyticity implies that differences in the distributions may be detected almost surely at a finite number of randomly chosen locations/frequencies. The new tests are consistent against a larger class of alternatives than the previous linear-time tests based on the (non-smoothed) empirical characteristic functions, while being much faster than the current state-of-the-art quadratic-time kernel-based or energy distance-based tests. Experiments on artificial benchmarks and on challenging real-world testing problems demonstrate that our tests give a better power/time tradeoff than  competing approaches, and in some cases, better outright power than even the most expensive quadratic-time tests. This performance advantage is retained even in high dimensions, and in cases where the difference in distributions is not observable with low order statistics. 
\end{abstract}
\vspace{-2mm}
\section{Introduction}

Testing whether two random variables are identically distributed without imposing any parametric assumptions on their distributions is important in a variety of scientific applications. These include data integration in bioinformatics \cite{borgwardt2006integrating}, benchmarking for steganography \cite{pevny2008benchmarking} and automated model checking \cite{Lloyd2014}. Such problems are addressed in the statistics literature via two-sample tests (also known as homogeneity tests).

Traditional approaches to two-sample testing are based on  distances between representations of the distributions, such as density functions, cumulative distribution functions, characteristic functions or mean embeddings in a reproducing kernel Hilbert space (RKHS) \cite{SriGreFukLanetal10, Sriperumbudur2011}. These representations are infinite dimensional objects, which poses challenges when defining a distance between distributions. Examples of such distances include the  classical  Kolmogorov-Smirnov distance (sup-norm between cumulative distribution functions);  the Maximum Mean Discrepancy (MMD) \cite{Gretton2012}, an RKHS norm of the difference between mean embeddings, and the $\mathbb{N}$-distance (also known as energy distance) \cite{zinger1992characterization,szekely2003statistics,baringhaus2004new},
 which is an MMD-based test for a particular family of kernels \cite{SejSriGreFuk13} .
  Tests may also be based on quantities other  than distances, an example being the Kernel Fisher Discriminant (KFD) \cite{Harchaoui07}, the estimation of which still requires calculating the RKHS norm of a difference of mean embeddings, with normalization by an inverse covariance operator.

 In contrast to consistent two-sample tests, heuristics based on pseudo-distances, such as the difference between characteristic functions  evaluated at a single frequency, have been studied in the context of goodness-of-fit tests \cite{heathcote1972test, heathcote1977integrated}. It was shown that the power of such tests can be maximized against fully specified alternative hypotheses, where test power is the probability of correctly rejecting the null hypothesis that the distributions are the same.  In other words, if the class of distributions being distinguished is known in advance, then the tests can focus only at those particular frequencies where the characteristic functions differ most. This approach was generalized to evaluating the empirical characteristic functions at multiple distinct frequencies by \cite{EppsSingleton1986}, thus improving on tests that need  to know the single ``best'' frequency in advance (the cost remains linear in the sample size, albeit with a larger constant). This approach still fails to solve the consistency problem, however: two distinct characteristic functions can agree on an interval, and if the tested frequencies fall in that interval, the distributions will be indistinguishable.

 
 In Section \ref{sec:distances} of the present work, we 
 introduce two novel distances between distributions, which both  use a parsimonious representation of the probability measures.  The first distance builds on the notion of differences in characteristic functions with the introduction of \textit{smooth  characteristic functions},  which can be though as the analytic analogues of the characteristics functions. A distance between  smooth characteristic functions evaluated at a single random frequency is almost surely a distance (Definition \ref{rand:metric} formalizes this  concept) between these two distributions. In other words, there is no need to calculate the whole infinite dimensional representation - it is almost surely sufficient to evaluate it at a single random frequency (although checking more frequencies will generally result in more powerful tests). The second distance is based on analytic mean embeddings of two distributions in a characteristic RKHS; again, it is sufficient to evaluate the distance between mean embeddings at a single randomly chosen point to obtain almost surely a distance.
 To our knowledge, this representation is the first mapping of the space of probability measures into a finite dimensional Euclidean space (in the simplest case, the real line) that is almost surely an injection, and as a result almost surely a metrization. This metrization  is very appealing from a computational viewpoint, since the statistics based on it have linear time complexity 
 (in the number of samples) and constant memory requirements.    

 We construct statistical tests in Section \ref{sec:test}, based on empirical estimates of differences
 in the analytic representations of the two distributions.
 Our tests have a number of theoretical and computational advantages over previous approaches. The test based on differences between analytic mean embeddings  is a.s. consistent for all distributions, and the  test based on differences between smoothed characteristic functions is a.s. consistent for all distributions with integrable characteristic functions (contrast with [7], which is only consistent under much more onerous conditions, as discussed above). This same weakness was used by  \cite{AlbaFernandez2008} in justifying a test that integrates over the {\em entire} frequency domain (albeit at cost quadratic in the sample size), for which the  quadratic-time MMD is a generalization \cite{Gretton2012}. Compared with such quadratic time  tests, our tests can be conducted in linear time -- hence, we expect their power/computation tradeoff to be superior.

 We provide several  experimental benchmarks (Section \ref{sec:experiments}) for our tests.  First, we compare test power as a function of computation time for two real-life testing settings: amplitude modulated audio samples, and the Higgs dataset, which are both challenging multivariate testing problems.
 Our tests give a better power/computation tradeoff than the characteristic function-based tests of \cite{EppsSingleton1986}, the previous sub-quadratic-time MMD tests \cite{GreSriSejetal12,ZarGreBla13}, and the quadratic-time MMD test.
 In terms of power when unlimited computation time is available, we might expect worse performance
 for the new tests, in line with findings for linear- and sub-quadratic-time MMD-based tests \cite{HoShieh06,Gretton2012,GreSriSejetal12,ZarGreBla13}. Remarkably, such a loss of power is not the rule: for instance,  when  distinguishing signatures of the Higgs boson from background noise \cite{Baldi2014} ('Higgs dataset'), we observe that a test based on differences in smoothed empirical characteristic functions outperforms the quadratic-time MMD. This is in contrast to linear- and sub-quadratic-time MMD-based tests, which by construction are less powerful than  quadratic-time MMD.
 Next, for challenging artificial data (both high-dimensional distributions, and distributions for which the difference is very subtle), our tests again give a better power/computation tradeoff than  competing methods.

\vspace{-2mm}
\section{Analytic embeddings and distances}\label{sec:distances}
In this section we consider  mappings from the space of probability measures into a sub-space of real valued analytic functions. We will show that evaluating  these maps at $J$ randomly selected points is almost surely injective for any $J>0$. Using this result, we obtain a simple (randomized) metrization of the space of probability measures. This metrization is used in the next section to construct linear-time nonparametric two-sample tests.    


To motivate our approach, we begin by recalling an integral family of distances between distributions, denoted Maximum Mean Discrepancies (MMD) \cite{Gretton2012}. The MMD is defined as
\begin{equation}
\text{MMD}(P,Q) = \sup_{f \in B_k} \left[\int_{E} f dP - \int_{E} f dQ \right],
\end{equation}
where $P$ and $Q$ are probability measures on $E$, and $B_k$ is the unit ball in the RKHS $H_k$ associated with a positive definite kernel $k:E \times E\to\mathbf R$. A popular choice of $k$ is the Gaussian kernel $k(x,y) = \exp(-\|x-y\|^2/\gamma^2)$ with bandwidth parameter $\gamma$. It can be shown that the MMD is equal to the RKHS distance between so called mean embeddings,
\begin{equation}\label{eq:MMD-meanembedding}
 \text{MMD}(P,Q) = \| \mu_P - \mu_Q \|_{H_k},
\end{equation}
where $\mu_P$ is an embedding of the probability measure $P$  to $H_k$,
\begin{equation}
 \mu_{P}(t)= \int_{E} k(x,t) dP(x),
\end{equation}
and $\| \cdot \|_{H_k}$ denotes the norm in the RKHS $H_k$. When $k$ is translation invariant, i.e., $k\left(x,y\right)=\kappa(x-y)$,  the squared MMD can be written \cite[Corollary 4]{SriGreFukLanetal10} as
\begin{equation}\label{eq:mmd_bochner}
\text{MMD}^2(P,Q) = \int_{\mathbf R^d} \left| \varphi_P(t) - \varphi_Q(t) \right|^2 F^{-1}\kappa(t) dt,
\end{equation}
where $F$ denotes the Fourier transform, $F^{-1}$ is the inverse Fourier transform,\ and $\varphi_P$, $\varphi_Q$ are the characteristic functions of $P$, $Q$, respectively. \ From \cite[Theorem 9]{SriGreFukLanetal10}, a kernel  is called {\em characteristic} when
\begin{equation}\label{eq:characteristic}
\text{MMD}(P,Q) = 0 \mbox{ iff } P=Q.
\end{equation}
Any bounded, continuous, translation-invariant kernel whose inverse Fourier transform is almost everywhere non-zero is characteristic \cite{SriGreFukLanetal10}. By representation \eqref{eq:MMD-meanembedding}, it is clear that MMD with a characteristic kernel is a metric.


\paragraph{Pseudometrics based on characteristic functions.}
A practical limitation when using the MMD in testing is that empirical estimates are expensive
to compute, these being the sum of two U-statistics and an empirical average, with cost quadratic
in the sample size. We might instead consider a
 finite dimensional approximation to the MMD, achieved by estimating the integral \eqref{eq:mmd_bochner}, with the random variable 
\begin{equation}\label{eq:unsmoothedCharFuncDist}
d_{\varphi,J}^2(P,Q) =  \frac{1}{J} \sum_{j=1}^{J} | \varphi_P(T_j) - \varphi_Q(T_j) |^2 , 
\end{equation}
where $\left\{T_j\right\}_{j=1}^J$ are sampled independently from the distribution with a density function $F^{-1}\kappa$. This type of approximation is applied to various kernel algorithms under the name of {\em random Fourier features} \cite{Rahimi2007, LeSarlosSmola2013}.  In the statistical testing literature, the quantity $d_{\varphi,J}(P,Q)$ predates the MMD by a considerable time, and was studied in \cite{heathcote1972test, heathcote1977integrated,EppsSingleton1986}, and more recently revisited in \cite{zhao2014fastmmd}. Our first proposition is that $d_{\varphi,J}^2(P,Q)$ can be a poor choice of distance between probability measures, as it fails to distinguish a large class of measures.  The following result is proved in the Appendix.
\begin{proposition}
\label{prop:notSufficient}
 Let $J\in\mathbb N$ and let   $\left\{T_j\right\}_{j=1}^J$ be a sequence of real valued i.i.d. random variables with a distribution which is absolutely continuous with respect to the Lebesgue measure. For any $\epsilon>0$, there exists an uncountable set $\mathcal A$ of mutually distinct probability measures (on the real line)  such that for any $P,Q \in \mathcal A$, $\mathbb P\left(d_{\varphi,J}^2(P,Q) = 0\right) \geq 1 -\epsilon $.
\end{proposition}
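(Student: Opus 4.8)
The plan is to reduce the statement to the existence of characteristic functions that coincide on a large compact set, and then to produce such functions by a high-frequency perturbation of a fixed density. \emph{Reduction.} Since $d_{\varphi,J}^2(P,Q)=\frac1J\sum_{j=1}^J|\varphi_P(T_j)-\varphi_Q(T_j)|^2$ is a sum of nonnegative terms, it vanishes exactly when $\varphi_P(T_j)=\varphi_Q(T_j)$ for all $j$. Hence it suffices to exhibit an uncountable family $\mathcal A$ of mutually distinct probability measures whose characteristic functions \emph{pairwise coincide on a common Borel set} $S$, chosen so that $\mathbb P(T_1\in S)$ is close to one: if every $\varphi_P$ agrees on $S$, then $\{T_j\in S\text{ for all }j\}\subseteq\{d_{\varphi,J}^2(P,Q)=0\}$ for all $P,Q\in\mathcal A$. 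Because the $T_j$ are i.i.d.,
\[
\mathbb P(T_j\in S\text{ for all }j)=\mathbb P(T_1\in S)^J,
\]
so I want $\mathbb P(T_1\in S)\ge(1-\epsilon)^{1/J}$. As the law of $T_1$ is a probability measure it is tight, and I take $S=[-R,R]$ with $R$ large enough that $\mathbb P(|T_1|\le R)\ge(1-\epsilon)^{1/J}$.

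\emph{Construction and conclusion.} Fix a real, even $\psi\in C_c^\infty(\mathbb R)$ with $\psi\not\equiv0$ and $\operatorname{supp}\psi\subseteq\{t:|t|\ge R+1\}$, and let $h=\mathcal F^{-1}\psi$, so that $h$ is a real-valued Schwartz function whose Fourier transform is $\psi$. Then $\int h\,dx=\psi(0)=0$ since $0\notin\operatorname{supp}\psi$. Next fix a smooth, strictly positive density $f$ with merely polynomial tail decay; since $h$ decays faster than any polynomial, $M:=\sup_x|h(x)|/f(x)<\infty$, and I set $\theta_0=1/M$. For every $\theta\in[-\theta_0,\theta_0]$ the function $f+\theta h$ is nonnegative and integrates to $1$, so $dP_\theta:=(f+\theta h)\,dx$ defines a probability measure; these are mutually distinct and uncountable in number because $h\not\equiv0$. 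For $P_\theta,P_{\theta'}\in\mathcal A:=\{P_\theta:\theta\in[-\theta_0,\theta_0]\}$ one computes $\varphi_{P_\theta}(t)-\varphi_{P_{\theta'}}(t)=(\theta-\theta')\psi(t)$, which is $0$ for $|t|\le R$. The inclusion above then yields $\mathbb P(d_{\varphi,J}^2(P_\theta,P_{\theta'})=0)\ge\mathbb P(|T_1|\le R)^J\ge1-\epsilon$, as required.

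\emph{Main obstacle.} The crux is building a single perturbation $h$ that is simultaneously localized in frequency away from $S$ (so that each pairwise difference of characteristic functions is supported outside $[-R,R]$) and small enough in amplitude, relative to a fixed density, to keep $f+\theta h\ge0$ on all of $\mathbb R$. These pull in opposite directions, since frequency localization away from the origin forces $h$ to be spread out and oscillatory; the resolution is to choose the base density $f$ with tails heavy enough (polynomial) to dominate the rapidly decaying Schwartz function $h$ pointwise, which is precisely what makes $\sup_x|h|/f$ finite and furnishes the admissible range of $\theta$. Absolute continuity of the $T_j$ is used only lightly here, to guarantee the tightness invoked in choosing $R$ (and to match the setting $T_j\sim F^{-1}\kappa$); the real content is the Fourier-analytic construction of $h$.
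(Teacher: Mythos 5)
Your proof is correct, but it takes a genuinely different route from the paper's. The paper invokes Polya's theorem: the triangular functions $f_w(t)=\max(1-w|t|,0)$, $w>1/I$, form an uncountable family of characteristic functions that all vanish outside a small interval $[-I,I]$, where $I$ is chosen (using absolute continuity of the law of $T_1$ to rule out an atom at the origin) so that $\mathbb P(|T_1|>I)\ge(1-\epsilon)^{1/J}$; the statistic then vanishes whenever every $T_j$ lands outside $[-I,I]$, where all the CFs are identically zero. You instead arrange agreement of the characteristic functions on a large compact interval $[-R,R]$ capturing most of the mass of $T_1$, by perturbing a fixed heavy-tailed density $f$ to $f+\theta h$, where $h$ has Fourier transform $\psi$ supported in $\{|t|\ge R+1\}$; the pointwise domination $|h|\le Mf$ furnishes an interval of admissible $\theta$ and hence an uncountable family. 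Both arguments are sound. The paper's is shorter and requires no Fourier-analytic construction beyond citing Polya; yours is more flexible --- it produces smooth, strictly positive, mutually absolutely continuous densities, it makes vivid that the failure mode of $d_{\varphi,J}$ is disagreement hidden at frequencies the test never samples, and it uses only tightness of the law of $T_1$ rather than absolute continuity (the paper needs the latter, mildly, so that $\mathbb P(|T_1|\le I)$ can be made small). Minor conventions aside (your identity $\varphi_{P_\theta}-\varphi_{P_{\theta'}}=(\theta-\theta')\psi$ holds up to the Fourier normalization constant and a reflection, both harmless since $\psi$ is even and only its support matters), every step checks out: $\int h=\psi(0)=0$ preserves total mass, $h\not\equiv 0$ gives distinctness and $M>0$, and the i.i.d.\ product bound delivers $\mathbb P(|T_1|\le R)^J\ge 1-\epsilon$.
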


We are therefore motivated to find  distances of the form (\ref{eq:unsmoothedCharFuncDist}) that
can distinguish larger classes of distributions, yet remain efficient to compute.
These distances are characterized as follows:
\begin{definition}[Random Metric]
\label{rand:metric}
A random process  $d$ with the values in $\mathbf{R}$, indexed with pairs  from the set of probability measures $\mathcal{M}$  
\[
d  = \{ d(P,Q) : P,Q \in \mathcal{M} \} 
\]
is said to be a random metric if it satisfies all the conditions for a metric with qualification `almost surely'. Formally, for all $P,Q,R \in \mathcal{M}$, random variables $d(P,Q),d(P,R),d(R,Q)$ must satisfy 
\begin{enumerate}
 \item $d(P,Q) \geq 0$ a.s.
 \item if $P=Q$, then $d(P,Q)=0$ a.s, if $P \neq Q$ then $d(P,Q)=0$ a.s.
 \item $d(P,Q) = d(Q,P)$ a.s.
 \item $d(P,Q) \leq d(P,R)+d(R,Q)$ a.s. \footnote{ Note that this does not imply that realizations of $d$ are distances on $\mathcal{M}$, but it does imply that they are almost surely distances for all arbitrary finite subsets of $\mathcal{M}$.}
\end{enumerate}

\end{definition}
From the statistical testing point of view, the coincidence axiom of a metric $d$, $d(P,Q)=0$ if and only if $P=Q$, is key, as it  ensures consistency against all alternatives. The quantity $d_{\varphi,J}(P,Q)$ in (\ref{eq:unsmoothedCharFuncDist}) violates the coincidence axiom, so it is only a random pseudometric (other axioms are trivially satisfied).
We remedy this problem by replacing the characteristic functions by smooth characteristic functions:
\begin{definition}
A smooth characteristic function $\phi_{P}(t)$ of a measure $P$ is a characteristic function of $P$ convolved with an analytic smoothing kernel $l$, i.e.
\begin{equation}
\label{eq:defSmoothed}
\phi_{P}(t) = \int_{\mathbf R^d} \varphi_{P}(w) l(t-w) dw,\qquad t\in\mathbf R^d.
\end{equation}
\end{definition}
The analogue of $d_{\varphi,J}(P,Q)$ for smooth characteristic functions is simply
\begin{equation}
d^2_{\phi,J}(P,Q) = \frac{1}{J} \sum_{j=1}^{J} |\phi_P(T_j) - \phi_Q(T_j) |^2, 
\end{equation}
where  $\left\{T_j\right\}_{j=1}^J$ are sampled independently from the absolutely continuous distribution (returning to our earlier example, this might be $F^{-1}\kappa(t)$ if we believe this to be an informative choice).  The following theorem, proved in the Appendix, demonstrates that the smoothing greatly
increases the class of distributions we can distinguish.
\begin{Theorem}
\label{th:charEmb}
Let $l$ be an analytic, integrable kernel with an inverse  Fourier transform strictly greater than zero. Then, for any $J>0$, $d_{\phi,J}$ is a random metric on the space of probability measures with integrable characteristic functions, and $\phi_P$ is an analytic function.
\end{Theorem}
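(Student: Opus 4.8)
The plan is to verify the four conditions of Definition~\ref{rand:metric}, leaving the coincidence axiom for last since the other three are immediate. Nonnegativity and symmetry hold for \emph{every} realisation of $\{T_j\}_{j=1}^J$, because $d_{\phi,J}^2(P,Q)=\frac1J\sum_{j=1}^J|\phi_P(T_j)-\phi_Q(T_j)|^2$ is a sum of squared moduli and is invariant under exchanging $P$ and $Q$; in particular both hold almost surely. For the triangle inequality I would observe that $\sqrt{J}\,d_{\phi,J}(P,Q)$ is the Euclidean norm in $\mathbb{C}^J$ of the vector with entries $\phi_P(T_j)-\phi_Q(T_j)$, which is the sum of the analogous vectors for the pairs $(P,R)$ and $(R,Q)$; the triangle inequality in $\mathbb{C}^J$ then yields condition~4 surely. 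Finally, if $P=Q$ then $\phi_P\equiv\phi_Q$, so $d_{\phi,J}(P,Q)=0$ surely.

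The substance lies in the remaining half of the coincidence axiom: if $P\neq Q$, then $d_{\phi,J}(P,Q)\neq 0$ almost surely. I would base this on two properties of $\delta:=\phi_P-\phi_Q$, namely that $\delta$ is real-analytic and that $\delta\not\equiv 0$. Granting both, the zero set $\{t:\delta(t)=0\}$ of a non-trivial real-analytic function on $\mathbf{R}^d$ has Lebesgue measure zero (a standard fact). Since each $T_j$ is absolutely continuous with respect to Lebesgue measure, $\mathbb{P}(\delta(T_j)=0)=0$; and because $\{d_{\phi,J}(P,Q)=0\}=\{\delta(T_j)=0\ \text{for all }j\}\subseteq\{\delta(T_1)=0\}$, we obtain $\mathbb{P}(d_{\phi,J}(P,Q)=0)=0$ for every $J\geq 1$, as required.

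To see that $\delta\not\equiv 0$, I would rewrite $\phi_P$ by inserting $\varphi_P(w)=\int e^{iw^\top x}\,dP(x)$ into \eqref{eq:defSmoothed} and applying Fubini (legitimate since $l$ is integrable and $|\varphi_P|\leq 1$), obtaining $\phi_P(t)=\int e^{it^\top x}\,\tilde{l}(x)\,dP(x)$ with $\tilde{l}(x)=\int e^{-iu^\top x}l(u)\,du$. The function $\tilde{l}$ differs from the inverse Fourier transform of $l$ only by a reflection of its argument and a positive constant, so by hypothesis $\tilde{l}>0$ everywhere; it is also bounded, so $\tilde{l}\,dP$ is a finite measure whose Fourier transform is exactly $\phi_P$. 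If $\delta\equiv 0$, then $\tilde{l}\,dP$ and $\tilde{l}\,dQ$ have the same Fourier transform, hence are equal by uniqueness; dividing by the strictly positive $\tilde{l}$ gives $P=Q$, contradicting $P\neq Q$.

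The main obstacle is proving that $\phi_P$, and hence $\delta$, is analytic, which is also the theorem's second assertion. Here I would exploit that $l$ is an analytic kernel: it admits a holomorphic extension to a complex neighbourhood of $\mathbf{R}^d$, and I would define $\phi_P(z)=\int \varphi_P(w)\,l(z-w)\,dw$ for complex $z$ in that neighbourhood. Showing this extension is holomorphic --- by differentiating under the integral sign, or via Morera's theorem combined with Fubini --- reduces to exhibiting an integrable dominating function for $l(z-w)$ (and its complex derivative) uniformly for $z$ near the real axis; restricting to real $z$ then gives real-analyticity. The delicate points, where I expect the real work to lie, are verifying that the holomorphic continuation of $l$ decays fast enough in the imaginary directions for these dominating integrals to converge, and checking that it is precisely the integrability assumption on the characteristic functions that makes the interchange of limit and integral legitimate across the whole neighbourhood.
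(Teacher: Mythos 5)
Most of your plan is sound and tracks the paper's argument: the paper likewise reduces everything to (i) injectivity of $P\mapsto\phi_P$, (ii) analyticity of $\phi_P$, and (iii) the fact that a non-trivial analytic function vanishes only on a Lebesgue-null set, so that absolutely continuous $T_j$ avoid the zero set of $\phi_P-\phi_Q$ almost surely (Lemmas \ref{lem:disc} and \ref{main:lemma}). Your rewriting of $\phi_P$ as $\int e^{it^\top x}\tilde l(x)\,dP(x)$ is exactly Proposition \ref{lemma:compChar}, and your injectivity argument via uniqueness of Fourier transforms of the finite measures $\tilde l\,dP$, $\tilde l\,dQ$ is an essentially equivalent (if anything slightly cleaner) dual of the paper's Lemma \ref{lem:meanEmb}, which instead applies the inverse Fourier transform to the convolution identity and cancels the strictly positive factor $F^{-1}l$ against the densities.

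The genuine gap is in the analyticity of $\phi_P$, which you correctly identify as the crux but do not actually establish. Your proposed route --- holomorphically extend $l$ to a complex neighbourhood of $\mathbf{R}^d$ and differentiate $\int\varphi_P(w)\,l(z-w)\,dw$ under the integral sign --- requires an integrable dominating function for $l(z-w)$ uniformly for $z$ in a complex neighbourhood of each real point, and nothing in the hypotheses supplies this: real-analyticity of $l$ on $\mathbf{R}^d$ only guarantees a holomorphic extension to some neighbourhood whose width may shrink to zero at infinity and on which $|l|$ may grow without control, so the dominating integral need not converge and the step "exhibit an integrable dominating function" can fail. The paper closes this gap by a different mechanism that exploits the positive-definiteness of $l$ (guaranteed by Bochner's theorem since $F^{-1}l>0$), which your argument never uses: Lemma \ref{lem:meanEmb} shows via the Riesz representation theorem that $\phi_P$ is an element of the RKHS $\mathcal{H}_l$ (this is where integrability of $\varphi_P$ enters), and Lemma \ref{lem:analyticH} shows that \emph{every} function in the RKHS of a bounded analytic kernel is analytic, because it is a pointwise limit of a uniformly bounded sequence of finite linear combinations of the analytic functions $l(\cdot,x)$, and pointwise limits of uniformly bounded analytic functions are analytic (Proposition \ref{anal}, a cited result). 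To complete your proof you would either need to add growth assumptions on the complex extension of $l$ that the theorem does not make, or replace your direct-domination step with the RKHS argument.
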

This result is primarily a consequence of analyticity of smooth characteristic functions and the fact that analytic functions are 'well behaved'.
There is an additional, practical advantage to smoothing: when the variability in the difference of the characteristic functions is high, and these differences are local, smoothing distributes the difference in CFs more evenly in the frequency domain (a simple illustration is in Fig. \ref{fig:var}, Appendix), making them easier to find by measurement at a small number of randomly chosen points.
This accounts for the observed improvements in test power in Section \ref{sec:experiments}, over differences in unsmoothed CFs.
\vspace{-2mm}
\paragraph{Metrics based on mean embeddings.}
The key step which led us to the construction of a random metric $d_{\phi,J}$ is the convolution of the original characteristic functions with an analytic smoothing kernel. This idea need not be restricted to the representations of probability measures in the frequency domain. We may instead directly convolve the probability measure  with a positive definite kernel $k$ (that need not be translation invariant), yielding its mean embedding into the associated RKHS,
\begin{equation}
 \mu_{P}(t)= \int_{E} k(x,t) dP(x).
\end{equation}
We say that a positive definite kernel $k:\mathbf{R}^D \times \mathbf{R}^D \to \mathbf{R}$ is analytic on its domain if for all $x \in \mathbf{R}^D$, the feature map $k(x,\cdot)$ is an analytic function on $\mathbf{R}^D$. By using embeddings with \emph{characteristic and analytic} kernels, we obtain particularly useful representations of distributions. As for the smoothed CF case, we define   
\begin{equation}
\label{eq:preStatistic}
d_{\mu,J}^2(P,Q)  = \frac{1}{J} \sum_{j=1}^{J} (\mu_P(T_j) - \mu_Q(T_j) )^2. 
\end{equation}
The following theorem ensures that $d_{\mu,J}(P,Q)$ is also a random metric. 
\begin{Theorem}
\label{th:analyticMean}
Let $k$ be an analytic, integrable and characteristic kernel. Then for any $J>0$, $d_{\mu,J}$ is a random metric on the space of probability measures (and $\mu_P$ is an analytic function).
\end{Theorem}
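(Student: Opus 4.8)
The plan is to verify the four axioms of Definition \ref{rand:metric} in turn, observing that three of them are immediate and isolating the coincidence axiom as the sole point of substance. Writing $v_P = (\mu_P(T_1),\dots,\mu_P(T_J)) \in \mathbf{R}^J$, we have $d_{\mu,J}(P,Q) = J^{-1/2}\|v_P - v_Q\|_2$, so that non-negativity, symmetry, and the triangle inequality all hold surely (not merely almost surely) from the corresponding properties of the Euclidean norm on $\mathbf{R}^J$, for every realization of $T_1,\dots,T_J$. The forward half of the coincidence axiom is equally easy: if $P=Q$ then $\mu_P=\mu_Q$ as functions on $\mathbf{R}^D$, hence $\mu_P(T_j)=\mu_Q(T_j)$ for each $j$ and $d_{\mu,J}(P,Q)=0$ surely.

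The real content is the converse: if $P \neq Q$, then $d_{\mu,J}(P,Q) \neq 0$ almost surely. I would argue in three steps. First, establish that $\mu_P$ is an analytic function on $\mathbf{R}^D$; since $k(x,\cdot)$ is analytic for every $x$ and $k$ is integrable, this should follow by differentiating under the integral sign (justified by dominated convergence, using the integrability bound to control the feature-map derivatives), so that the local power-series structure of $k(x,\cdot)$ transfers to $\mu_P$. Second, invoke that $k$ is characteristic: by \eqref{eq:characteristic}, $P\neq Q$ forces $\mu_P \neq \mu_Q$, so $g := \mu_P - \mu_Q$ is a real-analytic function that is not identically zero. Third, I would use the key analytic fact that the zero set $\{t : g(t)=0\}$ of a not-identically-zero real-analytic function on $\mathbf{R}^D$ has Lebesgue measure zero.

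With these pieces in hand the conclusion is immediate: the event $\{d_{\mu,J}(P,Q)=0\}$ equals $\{g(T_j)=0 \text{ for all } j\}$, which is contained in $\{g(T_1)=0\}$. Because the $T_j$ are drawn from a distribution absolutely continuous with respect to Lebesgue measure and $\{g=0\}$ is Lebesgue-null, we obtain $\mathbb{P}(g(T_1)=0)=0$, whence $\mathbb{P}(d_{\mu,J}(P,Q)=0)=0$. The two steps I expect to carry the technical weight are the analyticity of $\mu_P$ --- where care is needed to confirm that integrability of $k$ genuinely licenses interchanging differentiation and integration and preserves a positive radius of convergence --- and the measure-zero property of the zero set, a standard but nontrivial fact about real-analytic functions (provable by induction on the dimension $D$, reducing to the one-dimensional statement that a nonzero analytic function has isolated zeros). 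The argument is structurally identical to that for Theorem \ref{th:charEmb}, with $\mu_P$ playing the role of the smooth characteristic function $\phi_P$.
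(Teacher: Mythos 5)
Your overall skeleton is the same as the paper's: the three ``easy'' axioms follow from the Euclidean norm on $\mathbf{R}^J$, and the whole content is the coincidence axiom, proved by (i) analyticity of $\mu_P$, (ii) injectivity of $P\mapsto\mu_P$ from the characteristic property, and (iii) the fact that a nonzero analytic function vanishes only on a Lebesgue-null set, combined with absolute continuity of the law of the $T_j$. Where you diverge is in step (i): you propose to get analyticity of $\mu_P$ by differentiating under the integral sign, and you correctly flag that this is the delicate point --- pointwise analyticity of $k(x,\cdot)$ plus integrability does \emph{not} by itself give the uniform-in-$x$ derivative bounds needed to conclude that the limit is analytic rather than merely $C^\infty$, so as written this step is not secured. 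The paper avoids this entirely by working inside the RKHS: Lemma \ref{lem:analyticH} shows that \emph{every} function in $\mathcal{H}_k$ is analytic, because each is a pointwise limit of a uniformly bounded sequence of finite linear combinations of feature maps (uniform boundedness coming from $\Vert f\Vert_\infty\le C\Vert f\Vert_{\mathcal{H}_k}$ for bounded $k$), and pointwise limits of uniformly bounded analytic functions are analytic (Proposition \ref{anal}); since $\mu_P\in\mathcal{H}_k$, analyticity follows. If you want to keep your direct route you would need to supply local bounds of the form $|\partial^\alpha k(x,\cdot)|\le C M^{|\alpha|}\alpha!$ uniformly in $x$, which is an extra hypothesis, so the RKHS argument is the cleaner path. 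On step (iii) you are actually more careful than the paper: Lemma \ref{lem:disc} argues that the zero set of a nonzero analytic function is discrete, which is only true in dimension one (consider $f(x,y)=x$ on $\mathbf{R}^2$); the correct statement, which you invoke, is that the zero set of a not-identically-zero real-analytic function on $\mathbf{R}^D$ has Lebesgue measure zero, provable by induction on $D$. The conclusion of the lemma, and hence of the theorem, is unaffected.
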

Note that this result is stronger than the one presented in Theorem \ref{th:charEmb}, since is is not restricted to the class of probability measures with integrable characteristic functions. Indeed, the assumption that the characteristic function is integrable implies the existence and boundedness of a density. Recalling the representation of MMD in \eqref{eq:MMD-meanembedding},
we have proved that it is almost always sufficient to measure difference between  $\mu_P$ and  $\mu_Q$ at a finite number of points, provided our kernel is characteristic and analytic. In the next section, we will see that metrization of the space of probability measures using random metrics $d_{\mu,J}$, $d_{\phi,J}$ is very appealing from the computational point of view. It turns out that  the statistical tests that arise from those metrics have linear time complexity (in the number of samples) and constant memory requirements.

\vspace{-2mm}
\section{Hypothesis Tests Based on Distances Between Analytic Functions}
\label{sec:test}
In this section, we provide two linear-time  two-sample tests: first, a test based on analytic mean embeddings, and  then a test based on  smooth characteristic functions. We further describe the relation with competing alternatives. Proofs of this chapter's propositions are in the Appendix \ref{ap:prrofs}.

\textbf{Difference in analytic functions}
In the previous section we described the random metric based on a difference in analytic mean embeddings, $d_{\mu,J}^2(P,Q)  = \frac{1}{J} \sum_{j=1}^{J} (\mu_P(T_j) - \mu_Q(T_j) )^2.$
If we replace $\mu_P$ with the empirical mean embedding $\hat \mu_P = \frac 1 n \sum_{i=1}^n k(X_i,\cdot)$ it can be shown that for any  sequence  of unique $\{ t_j\}_{j=1}^J$, under the null hypothesis, as $n\to\infty$,
\begin{equation}
\label{eq:sumChi}
\sqrt n \sum_{j=1}^{J} (\hat \mu_{P}(t_j) - \hat \mu_{Q}(t_j))^2
\end{equation}
converges in distribution to a sum of correlated chi-squared variables. Even for fixed $\{ t_j\}_{j=1}^J$, it is very computationally costly to obtain quantiles of this distribution, since this requires a bootstrap or permutation procedure. We will follow a different approach based on Hotelling's $T^2$-statistic  \cite{hotelling1931}.  The Hotelling's $T^2$-squared statistic of a normally distributed, zero mean, Gaussian vector $W = (W_1,\cdots,W_J)$, with a covariance matrix $\Sigma$, is $T^2 = W \Sigma^{-1} W$. The compelling property of the statistic is that it is distributed as a $\chi^2$-random variable with $J$ degrees of freedom. To see a link between $T^2$ and equation (\ref{eq:sumChi}), consider a random variable $\sum_{i=j}^J W_j^2$: this is also distributed as a sum of correlated chi-squared variables. In our case $W$ is replaced with a difference of normalized empirical mean embeddings, and $\Sigma$ is replaced with the empirical covariance of the difference of mean embeddings. Formally, let $Z_i$ denote the vector of differences between kernels at tests points $T_j$,
\begin{equation}
 Z_i = ( k(X_i,T_1) - k(Y_i,T_1), \cdots, k(X_i,T_J) - k(Y_i,T_J) )\in \mathbf R^J.
\end{equation}
We define the vector of mean empirical differences 
$W_n = \frac 1  n \sum_{i=1}^n Z_i, $
and its covariance matrix
$\Sigma_n = \frac 1  n Z Z^{T}$.
The test statistic is
\begin{equation}
 S_n = n W_n \Sigma_n^{-1} W_n.
\end{equation}
The computation of $S_n$ requires inversion of a $J\times J$ matrix $\Sigma_n$, but this is fast and numerically stable: $J$ will typically be small and is in our experiments less than 10. The next proposition demonstrates the use of $S_n$ as a two-sample test statistic.
\begin{proposition}[Asymptotic behavior of $S_n$]
\label{prop:Hotelling}
 Let $d_{\mu,J}^2(P,Q)=0$ a.s. and let $\{X_i\}_{i=1}^n$ and $\{Y_i\}_{i=1}^n$  be i.i.d. samples from $P$ and $Q$ respectively. Then the statistic $S_n$ is a.s. asymptotically distributed as a $\chi^2$-random variable with $J$ degrees of freedom (as $n \to \infty$ with $d$ fixed). If $d_{\mu,J}^2(P,Q)>0$ a.s., then a.s. for any fixed $r$, $\mathbb P(S_n > r) \to 1$  as $n \to \infty$ .
\end{proposition}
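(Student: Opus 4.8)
The plan is to condition on the random test locations $T=(T_1,\dots,T_J)$ and prove that the stated limits hold for almost every realization of $T$; the ``almost surely'' qualifiers in the statement then refer precisely to this randomness in $T$. Conditionally on $T$, the vectors $Z_1,\dots,Z_n$ are i.i.d.\ with conditional mean $\mu(T)=\big(\mu_P(T_1)-\mu_Q(T_1),\dots,\mu_P(T_J)-\mu_Q(T_J)\big)$, and because an analytic, integrable, characteristic kernel is bounded (e.g.\ the Gaussian), the second moments of $Z_i$ are finite. Hence the strong law of large numbers gives $\Sigma_n\to\Sigma(T):=\ev[Z_1 Z_1^\top\mid T]$ a.s., and the multivariate central limit theorem applies to $\sqrt n\,(W_n-\mu(T))$.

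First I would treat the null case $d_{\mu,J}^2(P,Q)=0$ a.s. By Theorem~\ref{th:analyticMean} this forces $P=Q$, so $\mu(T)=0$; then $\Sigma(T)$ is exactly the conditional covariance of $Z_1$, the CLT gives $\sqrt n\,W_n\xrightarrow{d}\mathcal N(0,\Sigma(T))$, and the law of large numbers gives $\Sigma_n\to\Sigma(T)$ a.s. Writing $S_n=(\sqrt n\,W_n)^\top\Sigma_n^{-1}(\sqrt n\,W_n)$ and invoking Slutsky's theorem, $S_n\xrightarrow{d}G^\top\Sigma(T)^{-1}G$ with $G\sim\mathcal N(0,\Sigma(T))$. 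Since $\Sigma(T)^{-1/2}G$ is then standard normal in $\mathbf R^J$, this limit is exactly $\chi^2_J$.

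For the alternative $d_{\mu,J}^2(P,Q)>0$ a.s., the same conditioning gives $\mu(T)\neq0$ for almost every $T$, indeed $\|\mu(T)\|^2=J\,d_{\mu,J}^2(P,Q)>0$. The strong law yields $W_n\to\mu(T)$ and $\Sigma_n^{-1}\to\Sigma(T)^{-1}$ a.s., so $n^{-1}S_n=W_n^\top\Sigma_n^{-1}W_n\to\mu(T)^\top\Sigma(T)^{-1}\mu(T)>0$ a.s. Multiplying by $n\to\infty$ shows $S_n\to\infty$ a.s., hence $\mathbb P(S_n>r)\to1$ for every fixed $r$.

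The main obstacle is the one assumption implicit in both arguments: that the limiting matrix $\Sigma(T)$ is invertible for almost every $T$. I would establish this by contradiction. If $\Sigma(T)$ were singular there would be a nonzero $c\in\mathbf R^J$ with $\sum_{j=1}^J c_j\big(k(X,T_j)-k(Y,T_j)\big)=0$ almost surely, i.e.\ the analytic function $x\mapsto\sum_j c_j k(x,T_j)$ would be constant on the support of the data. The analyticity of the feature maps $k(\cdot,T_j)$ (the same property underlying Theorem~\ref{th:analyticMean}), together with the absolute continuity of the law of $T$, rules out such a degenerate linear relation on a set of $T$ of positive measure, so $\Sigma(T)$ is a.s.\ positive definite. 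This is the step that genuinely uses analyticity; the remainder is the standard Hotelling $T^2$ argument combined with Slutsky's theorem and the law of large numbers.
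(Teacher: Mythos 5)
Your proof takes essentially the same route as the paper's: condition on the realization of $T=(T_1,\dots,T_J)$, apply the multivariate CLT, the law of large numbers and Slutsky's theorem to obtain the $\chi^2_J$ limit under the null, and use the law of large numbers to show that $n^{-1}S_n$ converges to a positive constant under the alternative, so that $S_n\to\infty$. The reduction of the null hypothesis to $\ev[Z_1\mid T]=0$ is also the same (the paper reads it off directly from $d_{\mu,J}^{2}=0$ at the realized $T_j$'s rather than passing through $P=Q$ via Theorem \ref{th:analyticMean}, but these are equivalent here).

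The one place you go beyond the paper is the invertibility of $\Sigma(T)$, which the paper's proof simply asserts without comment. Your instinct to flag this as the real obstacle is correct, but the argument you sketch does not close it: singularity of $\Sigma(T)$ only requires a nonzero $c$ with $c^{\top}Z_1=0$ almost surely, i.e.\ $\sum_j c_j k(\cdot,T_j)$ constant on the supports of $P$ and $Q$, and analyticity yields no contradiction when those supports are small. For instance, if $P=Q$ is supported on two points, then $Z_1$ lies on a fixed line through the origin and $\Sigma(T)$ is singular for \emph{every} $T$ once $J\ge 2$; the absolute continuity of the law of $T$ cannot rescue this. So the claim ``$\Sigma(T)$ is a.s.\ positive definite'' needs an additional non-degeneracy hypothesis on the data distribution (e.g.\ that its support is not contained in the zero set of any nontrivial function of the form $\sum_j c_j k(\cdot,T_j)-\text{const}$, which holds for instance when the support has positive Lebesgue measure), or the statistic must be defined via a pseudo-inverse. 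Since the paper's own proof silently makes the same assumption, your write-up matches the paper on every step the paper actually carries out, and is more candid about the one it does not.
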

We now apply the above proposition in obtaining a statistical test. 

\begin{test}[Analytic mean embedding ]
\label{test}
Calculate $S_n$. Choose a threshold $r_\alpha$ corresponding to the $1-\alpha$ quantile of a  $\chi^2$ distribution with $J$ degrees of freedom, and reject the null hypothesis whenever $S_n$ is larger than $r_\alpha$. 
\end{test}

There are a number of valid sampling schemes for the test points  $\left\{T_j\right\}_{j=1}^J$ to evaluate the differences in mean embeddings: see Section \ref{sec:experiments} for a discussion.

\textbf{Difference in smooth characteristic functions}
From the convolution definition of a smooth characteristic function \eqref{eq:defSmoothed}  it is not clear how to calculate its estimator in linear time. However, we show in the next proposition that a smooth characteristic function can be written as an expected value of some function with respect to the given measure, which can be estimated in a linear time.  
\begin{proposition}
\label{lemma:compChar}
Let $k$ be an integrable translation-invariant kernel and $f$ its inverse Fourier transform. Then the smooth characteristic function of $P$ can be written as  $\phi_{P}(t) = \int_{\mathbf R^d} e^{it^{\top}x} f(x) dP(x).$
\end{proposition}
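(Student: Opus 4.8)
The plan is to unfold the convolution in the definition \eqref{eq:defSmoothed} of $\phi_P$, insert the integral representation of the ordinary characteristic function, and then swap the two integrals. Writing $\varphi_P(w) = \int_{\mathbf{R}^d} e^{iw^\top x}\,dP(x)$ and substituting into \eqref{eq:defSmoothed} produces a double integral over $w$ and $x$, so the whole argument reduces to (i) interchanging the order of integration and (ii) recognising the resulting inner integral over $w$ as a Fourier transform of the smoothing kernel. I would carry these two steps out in turn.

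First I would justify the interchange by Fubini's theorem. The integrand is $e^{iw^\top x}\,l(t-w)$, and since $|e^{iw^\top x}| = 1$ its modulus equals $|l(t-w)|$. Integrating this bound against the product measure $dw \times dP$ gives
\[
\int_{\mathbf{R}^d}\!\!\int_{\mathbf{R}^d} |l(t-w)|\,dw\,dP(x) = \int_{\mathbf{R}^d}\|l\|_{1}\,dP(x) = \|l\|_{1} < \infty,
\]
where finiteness is exactly the integrability hypothesis on the kernel. Hence the integrand is absolutely integrable and Fubini applies. This is the only genuine technical point in the proof, and I expect it to be the main (if minor) obstacle: without integrability of $l$ the swap is not licensed and the target expression need not even be well defined. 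Everything else is a direct calculation.

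With the order reversed, the statement reduces to evaluating the inner integral. Applying the change of variables $u = t - w$, whose Jacobian has modulus one, I obtain
\[
\int_{\mathbf{R}^d} e^{iw^\top x}\,l(t-w)\,dw = e^{it^\top x}\int_{\mathbf{R}^d} e^{-iu^\top x}\,l(u)\,du = e^{it^\top x}\,(Fl)(x),
\]
so the inner factor is the Fourier transform of the smoothing kernel $l$. Identifying $l$ with the symmetric translation-invariant kernel $\kappa$ defining $k$, and using that for such a kernel the Fourier and inverse Fourier transforms agree (up to the normalisation fixed by the paper's conventions), this factor is exactly $f = F^{-1}\kappa$. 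Substituting back through the now-justified interchange yields
\[
\phi_P(t) = \int_{\mathbf{R}^d} e^{it^\top x}\,f(x)\,dP(x),
\]
as claimed. I would close by noting that the right-hand side is the expectation $\ev_{X\sim P}\!\left[e^{it^\top X}\,f(X)\right]$, which is precisely what makes an unbiased, linear-time empirical estimator of $\phi_P$ available and motivates the test construction in Section \ref{sec:test}.
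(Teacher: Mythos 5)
Your proof is correct and follows essentially the same route as the paper's: expand the convolution, justify the interchange of integrals via Fubini using integrability of the smoothing kernel, and identify the inner integral as a Fourier transform of that kernel. If anything, you are more explicit than the paper about the $F$ versus $F^{-1}$ normalisation and about identifying the smoothing kernel with $\kappa$, a point the paper's own proof glosses over.
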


It is now clear that a test based on the smooth characteristic functions is similar to the test based on mean embeddings. The main difference is in the definition of the vector of differences $Z_i$:
\begin{equation}
 Z_i =  (  f(X_i) \sin(  X_i T_1) - f(Y_i)\sin(Y_i T_1), f(X_i)\cos(  X_i T_1) - f(Y_i) \cos(Y_i T_1), \cdots )\in\mathbf R^{2J}
\end{equation} 
The imaginary and real part of the $e^{\sqrt{-1} T_j^{\top}X_i} f(X_i) - e^{\sqrt{-1} T_j^{\top}Y_i} f(Y_i)$ are stacked together, in order to ensure that $W_n$, $\Sigma_n$ and $S_n$ as all real-valued quantities.  
\begin{proposition}
\label{prop:Hotelling2}
 Let $d_{\mu,J}^2(P,Q)=0$ and let $\{X_i\}_{i=1}^n$ and $\{Y_i\}_{i=1}^n$  be i.i.d. samples from $P$ and $Q$ respectively. Then the statistic $S_n$ is almost surely asymptotically distributed as a $\chi^2$-random variable with $2J$ degrees of freedom (as $n \to \infty$ with $J$ fixed). If $d_{\phi,J}^2(P,Q) >0$ , then almost surely for any fixed $r$, $P(S_n > r) \to 1$ as $n \to \infty$.
\end{proposition}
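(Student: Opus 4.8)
The plan is to condition on the random test points $\{T_j\}_{j=1}^J$, prove the stated asymptotics over the randomness of the samples for Lebesgue-almost-every configuration of frequencies, and then read off the two hypotheses through the random-metric structure of Theorem \ref{th:charEmb}. The argument runs exactly parallel to that of Proposition \ref{prop:Hotelling}; the only structural change is that here $Z_i\in\mathbf R^{2J}$, since we stack the real and imaginary parts of $e^{\sqrt{-1}T_j^\top X_i}f(X_i)-e^{\sqrt{-1}T_j^\top Y_i}f(Y_i)$, so the limiting $\chi^2$ carries $2J$ rather than $J$ degrees of freedom.

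First I would use Proposition \ref{lemma:compChar} to write $\phi_P(t)=\mathbb E_{X\sim P}[e^{it^\top X}f(X)]$, so that the coordinates of $Z_i$ are bounded functions of the single pair $(X_i,Y_i)$ (boundedness follows from integrability of $k$, which makes $f$ bounded, together with the bounded trigonometric factors). Hence the $Z_i$ are i.i.d.\ with finite second moments, and a direct computation gives $\|\mathbb E Z_1\|^2=\sum_{j=1}^J|\phi_P(T_j)-\phi_Q(T_j)|^2=J\,d_{\phi,J}^2(P,Q)$. This identifies the two regimes: under the null ($d_{\phi,J}^2=0$ a.s., equivalently $P=Q$ by the coincidence axiom of the random metric) the mean vector $\mu_Z:=\mathbb E Z_1$ vanishes a.s., while under the alternative $\mu_Z\neq 0$ a.s.

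Next, in the null case, the multivariate central limit theorem gives $\sqrt n\,W_n\Rightarrow N(0,\Sigma)$ with $\Sigma=\operatorname{Cov}(Z_1)$, and the strong law gives $\Sigma_n\to\Sigma$ a.s. Provided $\Sigma$ is invertible, Slutsky's theorem and the continuous mapping theorem yield $S_n=(\sqrt n\,W_n)^\top\Sigma_n^{-1}(\sqrt n\,W_n)\Rightarrow G^\top\Sigma^{-1}G$ with $G\sim N(0,\Sigma)$; since $\Sigma^{-1/2}G\sim N(0,I_{2J})$, this limit is exactly $\chi^2_{2J}$. In the alternative case $W_n\to\mu_Z\neq 0$ a.s.\ and $\Sigma_n\to\Sigma\succ 0$ a.s., so $S_n\ge n\,\lambda_{\min}(\Sigma_n^{-1})\|W_n\|^2\to\infty$, giving $\mathbb P(S_n>r)\to 1$ for every fixed $r$.

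The main obstacle is showing that $\Sigma$ is almost surely positive definite, as both the existence of $\Sigma^{-1}$ and the degrees-of-freedom count depend on it. If $\Sigma$ were singular, some nonzero $a\in\mathbf R^{2J}$ would make $a^\top Z_1$ degenerate, i.e.\ a nontrivial linear combination of the maps $x\mapsto f(x)\cos(T_j^\top x)$ and $x\mapsto f(x)\sin(T_j^\top x)$, $j=1,\dots,J$, would be $P$-almost-surely constant. I would exclude this for almost every distinct choice of frequencies by the analyticity mechanism already used in Theorem \ref{th:charEmb}: such a relation forces an analytic identity that can hold only on a Lebesgue-null set of frequency configurations, so conditioning on the a.s.\ event that $\{T_j\}$ avoids this set secures $\Sigma\succ 0$. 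This is precisely where the ``almost surely'' qualifier over the random test points is spent.
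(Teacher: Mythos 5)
Your proposal is correct and follows essentially the same route as the paper: the paper's own proof of this proposition is simply a reference to the proof of Proposition \ref{prop:Hotelling}, which likewise conditions on the realization of the test frequencies, applies the multivariate CLT together with Slutsky's theorem to obtain the $\chi^2$ limit under the null (with $2J$ degrees of freedom here because real and imaginary parts are stacked), and argues divergence of $S_n$ under the alternative from $W_n\to\mu_Z\neq 0$ and $\Sigma_n\to\Sigma$. If anything, you are more careful than the paper, which asserts positive definiteness of $\Sigma$ parenthetically without argument, whereas you at least identify it as the point where the almost-sure qualifier over the test points must be spent.
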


\textbf{Other tests}.
The test  \cite{EppsSingleton1986} based on empirical characteristic functions was constructed originally for one test point and then generalized to many points - it is quite similar to our second test, but does not perform smoothing (it is also based on a $T^2$-Hotelling statistic). The block MMD \cite{ZarGreBla13} is a sub-quadratic test, which can be trivially linearized by fixing the block size, as presented in the Appendix.   Finally, another alternative is the MMD, an inherently quadratic time test. We scale MMD to linear time by sub-sampling our data set, and choosing only $\sqrt n$ points, so that the MMD complexity becomes $O(n)$. Note, however, that the true complexity of MMD  involves a permutation calculation of the null distribution at cost $O(b_n n)$, where  the number of permutations $b_n$  grows with $n$. See Appendix \ref{sec:otherTests} for a detailed description of alternative tests.  
\vspace{-2mm}
\section{Experiments}
\label{sec:experiments}

In this section we compare two-sample tests on both artificial benchmark data and on real-world data. We denote  the smooth characteristic function test as `Smooth CF', and the test based on the analytic mean embeddings as `Mean Embedding'. We compare against several alternative testing approaches: block MMD (`Block MMD'), a characteristic functions based test (`CF'), a sub-sampling MMD test (`MMD($\sqrt{n}$)'), and the quadratic-time MMD test (`MMD(n)').

\textbf{Experimental setup.} 
For all the experiments, $D$ is the dimensionality of samples in a dataset, $n$ is a number of samples in the  dataset (sample size) and $J$ is number of test frequencies. Parameter selection is required for all the tests. 
The table summarizes the main choices of the parameters made for the experiments. The first parameter is the test function, used to calculate the particular statistic. The scalar $\gamma$ represents the length-scale of the observed data. Notice that for the kernel tests we recover the standard parameterization $\exp( -\|\frac{x}{\gamma}-\frac{y}{\gamma}\|^2) = \exp( -\frac{ \|x-y\|^2}{\gamma^2}) $. The original CF test was proposed without any parameters, hence we added  $\gamma$ to ensure a fair comparison - for this test varying $\gamma$  is equivalent to adjusting the variance of the distribution of frequencies $T_j$. For all tests, the value of the scaling parameter $\gamma$ was chosen so as to maximize test power on a held-out training set: details are described in Appendix \ref{sec:params}. We chose not to optimize the sampling scheme for the Mean Embedding and Smooth CF tests, since this would give them an unfair advantage over the Block MMD, MMD($\sqrt{n}$) and CF tests. The block size in the Block MMD test and the number of test frequencies in the Mean Embedding, Smooth CF, and CF tests, were always set to the same value (not greater than 10) to maintain exactly the same time complexity. Note that we did {\em not} use the popular median heuristic for kernel bandwidth choice (MMD and B-test), since it gives poor  results for the Blobs and AM Audio datasets \cite{GreSriSejetal12}. We do not run MMD(n) test in the 'Simulation 1' and on  the 'Amplitude Modulated Music' since the sample size is $10000$, i.e., too large for a quadratic-time test with permutation sampling for the test critical value.

It is important to verify that Type I error is indeed at the design level, set at $\alpha=0.05$ in this paper. This is verified in the Appendix figure \ref{fig:H0}. Also shown in the plots is the $95\%$ percent confidence intervals for the results, as averaged over 4000 runs.

\begin{center}
  \label{tab:params}
  \begin{tabular}{ | l || c | c | c| }
    \hline
    Test & Test Function & Sampling scheme & Other parameters \\ 
    \hline
    Mean Embedding & $\exp(-\| \frac{x}{\gamma}-t\|^2)$ & $T_j \sim N(0_d,I_d)$ & $J$ - no. of test frequencies\\
    Smooth CF & $\exp( it^{\top}\frac{x}{\gamma} -\|\frac{x}{\gamma}-t \|^2)$ & $T_j \sim N(0_d,I_d)$ & $J$ - no. of test frequencies\\
    MMD(n),MMD($\sqrt{n}$) & $\exp( -\|\frac{x}{\gamma}-\frac{y}{\gamma}\|^2)$ & not applicable & $b$ -bootstraps \\ 
    Block MMD & $\exp( -\|\frac{x}{\gamma}-\frac{y}{\gamma}\|^2)$ & not applicable & $B$-block size \\
    CF & $\exp(it^{\top}\frac{x}{\gamma} )$ & $T_j \sim N(0_d,I_d)$ & $J$ - no. of test frequencies\\
    \hline
  \end{tabular}
\end{center}

\paragraph{Real Data 1: Higgs dataset,}\textit{$D=4$, $n$ varies, $J=10$.}
The first experiment we consider is on the UCI Higgs dataset \cite{Lichman:2013} described in \cite{Baldi2014} - the task is to distinguish signatures of processes which produce Higgs bosons from background processes which do not. We consider a two-sample test on certain extremely  low-level features in the dataset - kinematic properties measured by the particle detectors, i.e., the joint distributions of the azimuthal angular momenta $\varphi$ for four particle jets. We denote by $P$ the jet $\varphi$-momenta distribution of the background process (no Higgs bosons), and by $Q$ the corresponding distribution for the process that produces Higgs bosons (both are distributions on $\mathbf R^4$). As discussed in \cite[Fig. 2]{Baldi2014}, $\varphi$-momenta, unlike transverse momenta $p_T$, carry very little discriminating information for recognizing whether Higgs bosons were produced or not. Therefore, we would like to test the null hypothesis that the distributions of angular momenta $P$ (no Higgs boson observed) and $Q$ (Higgs boson observed) might yet be rejected. The results for different algorithms are presented in the Figure \ref{fig:higgs}. We observe that the joint distribution of the angular momenta is in fact a discriminative feature. Sample size varies from 1000 to 12000.  The Smooth CF test has significantly higher power than the other tests, including the quadratic-time MMD, which  we could only run on up to $5100$ samples  due to computational limitations. The leading performance of the Smooth CF test is especially remarkable given it is several orders of magnitude faster then the quadratic-time MMD(n), which is both expensive to compute, and requires a costly permutation approach to determine the significance threshold. 
\begin{figure}
  \centering    
  \centerline{\includegraphics[width=0.86\textwidth]{.//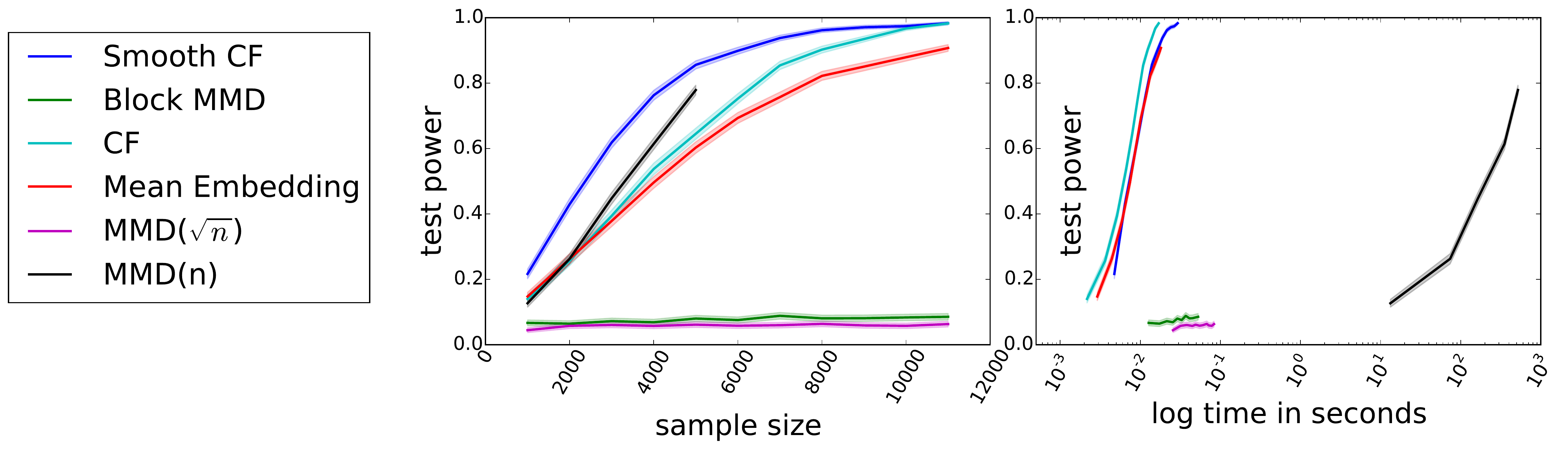}}
  \caption{Higgs dataset.  {\bf Left:} Test power vs.  sample size. {\bf Right:} Test power vs. execution time.}
  \label{fig:higgs}
\end{figure}

\paragraph{Real Data 2: Amplitude Modulated Music,}\textit{$D=1000$, $n=10000$, $J=10$.}
Amplitude modulation is the earliest technique used to transmit voice over the radio. In the following experiment observations were one thousand dimensional samples of carrier signals that were modulated with two different input audio signals from the same album, song $P$ and song $Q$ (further details of these data are described in \cite[Section 5]{GreSriSejetal12}). 
To increase the difficulty of the testing problem, independent Gaussian noise of increasing variance (in the range $1$ to $4.0$) was added to the signals. The results are presented in the Figure \ref{fig:mean}. Compared to the other tests, the Mean Embedding and Smooth CF tests are more robust to the moderate noise contamination.

\begin{figure}
  \centering    
  \centerline{\includegraphics[width=0.86\textwidth]{.//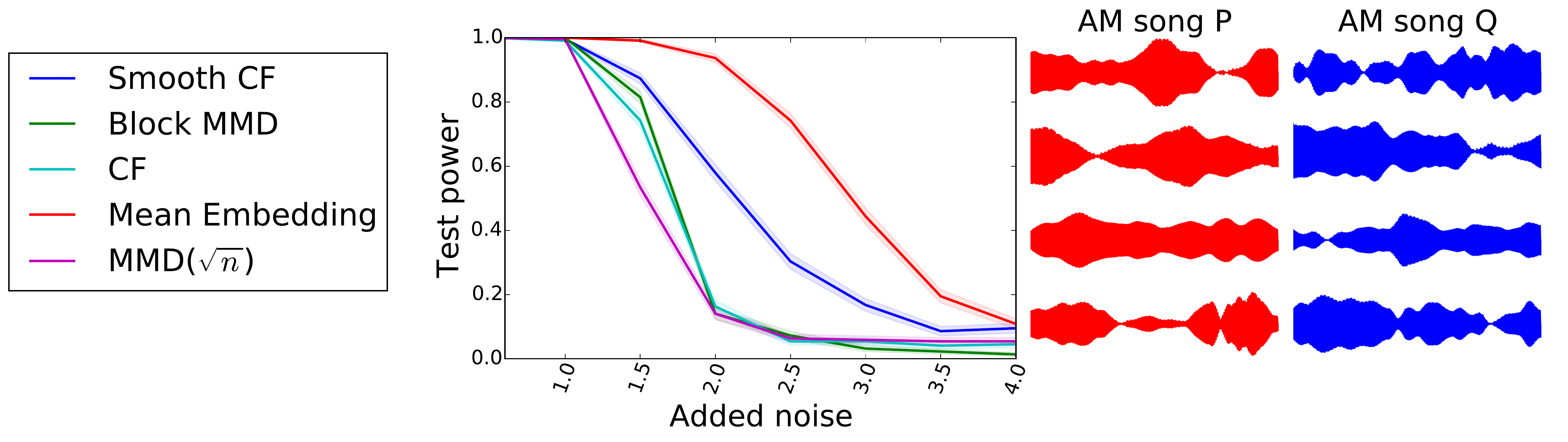}}
  \caption{Music  Dataset.{\bf Left:} Test power vs. added noise. {\bf Right:}  four samples from $P$ and $Q$.}
  \label{fig:mean}
\end{figure}

\paragraph{Simulation 1: High Dimensions,}\textit{$D$ varies, $n=10000$, $J=3$.}
It has been recently shown, in theory and in practice, that the two-sample problem gets more difficult as the number of the dimensions increases on which the distributions do not differ  \cite{RamRed15, RedRam15}.  In the following experiment, we study the power of the two-sample tests as a function of dimension of the samples. We run two-sample test on two datasets of Gaussian random vectors which differ \emph{only} in the first dimension, 
\begin{align*}
 \text{Dataset I: } \quad P = N(0_D,I_D)\qquad vs.& \qquad &Q = N\left((1,0,\cdots,0),I_D\right) \\
 \text{Dataset II: } \quad P = N(0_D,I_D)\qquad vs.& \qquad &Q = N\left(0_D,\text{diag}((2,1,\cdots,1))\right),
\end{align*}
where $0_d$ is a $D$-dimensional vector of zeros, $I_D$ is a $D$-dimensional identity matrix, and $\text{diag}(v)$ is a diagonal matrix with $v$ on the diagonal. The number of dimensions (D) varies from 50 to 1000 (Dataset I) and from 50 to 2500 (Dataset II). The power of the different two-sample tests is presented in  Figure  \ref{fig:varAndMean}. The Mean Embedding test yields best performance for both datasets, where the advantage is especially large for differences in variance.
\begin{figure}
  \centering    
  \centerline{\includegraphics[width=0.86\textwidth]{.//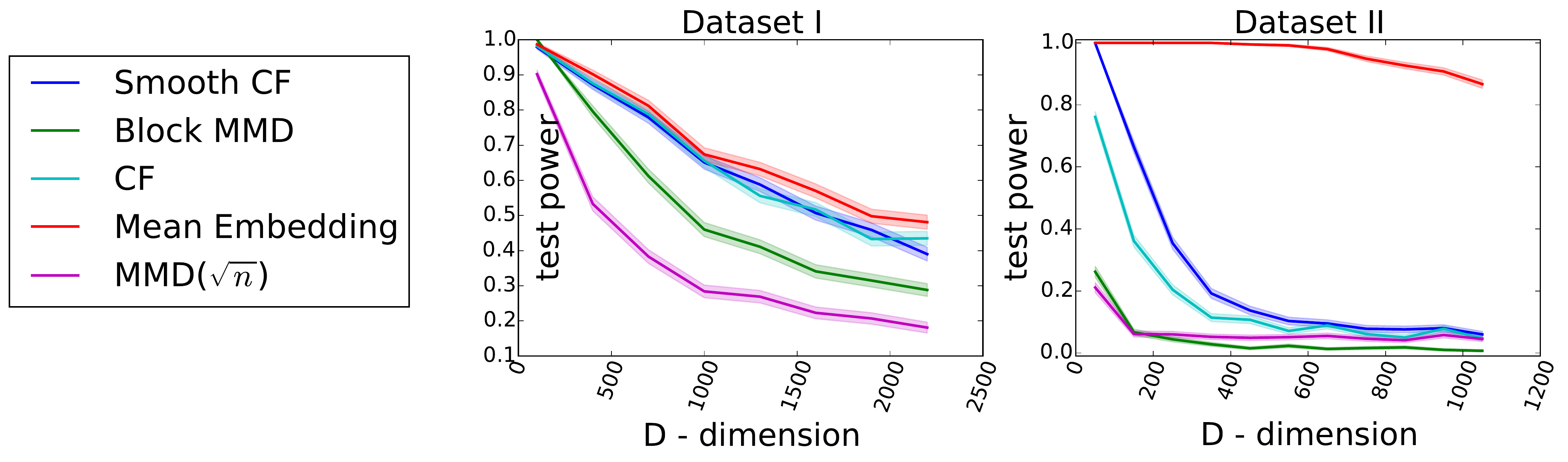}}
  \caption{ Power vs. redundant dimensions comparison for tests on high dimensional data.}
\label{fig:varAndMean}
\end{figure}
\paragraph{Simulation 2: Blobs,}\textit{$D=2$, $n$ varies, $J=5$.}
\label{sec:Blobs}
The Blobs dataset is a grid of two dimensional Gaussian distributions (see Figure \ref{fig:Blobs}), which is known to be a challenging two-sample testing task. The difficulty arises from the fact that the difference in distributions is encoded at a much smaller  lengthscale than the overall data. In this experiment both $P$ and $Q$ are a four by four grid of Gaussians, where $P$ has unit covariance matrix in each mixture component, while each component of $Q$ has a non unit covariance matrix. It was demonstrated by \cite{GreSriSejetal12} that a good choice of  kernel is crucial for this task. Figure \ref{fig:Blobs} presents the results of two-sample tests on the Blobs dataset. The number of samples varies from 50 to 14000 ( MMD(n) reached  test power one with $n=1400$).  We found that the  MMD(n) test has the best power as function of the sample size, but the  worst power/computation tradeoff. By contrast, random distance based tests have the best power/computation tradeoff.    
\begin{figure} 
  \centerline{\includegraphics[width=0.86\textwidth]{.//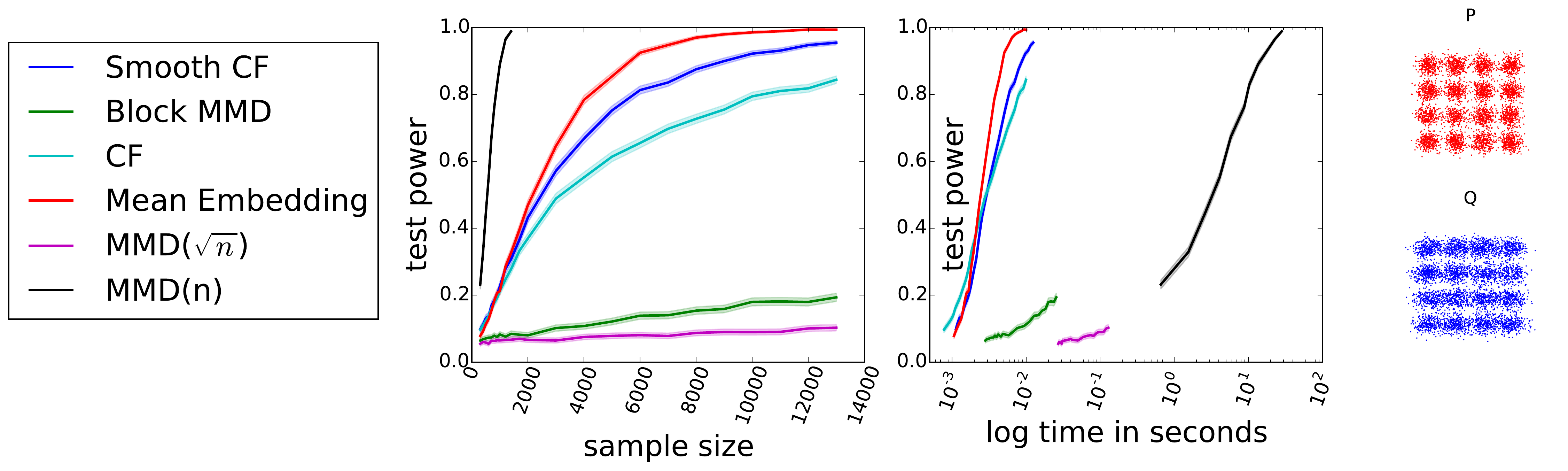}}
  \caption{Blobs Dataset. {\bf Left:}  test power vs. sample size. {\bf Center:} test power vs. execution time. {\bf Right:} illustration of the blob dataset. Each mixture component in the upper plot is a standard Gaussian, whereas those in the lower plot have the direction of the largest variance rotated by $\pi/4$ and amplified so the standard deviation in this direction is 2.}
  \label{fig:Blobs}
\end{figure}

{
\small{
\FloatBarrier
\bibliographystyle{plain}
\bibliography{paper}

\begin{thebibliography}{10}

\bibitem{AlbaFernandez2008}
V.~Alba~Fern\'{a}ndez, M.~Jim\'{e}nez-Gamero, and J.~Mu\~{n}oz Garcia.
\newblock A test for the two-sample problem based on empirical characteristic
  functions.
\newblock {\em Computational Statistics and Data Analysis}, 52:3730--3748,
  2008.

\bibitem{Baldi2014}
P.~Baldi, P.~Sadowski, and D.~Whiteson.
\newblock {Searching for exotic particles in high-energy physics with deep
  learning}.
\newblock {\em {Nature Communications}}, 5, 2014.

\bibitem{baringhaus2004new}
L~Baringhaus and C~Franz.
\newblock On a new multivariate two-sample test.
\newblock {\em J mult anal}, 88(1):190--206, 2004.

\bibitem{berlinet2004reproducing}
Alain Berlinet and Christine Thomas-Agnan.
\newblock {\em Reproducing kernel Hilbert spaces in probability and
  statistics}, volume~3.
\newblock Kluwer Academic Boston, 2004.

\bibitem{borgwardt2006integrating}
K.M. Borgwardt, A.~Gretton, M.J. Rasch, H.-P. Kriegel, B.~Sch{\"o}lkopf, and
  A.~Smola.
\newblock Integrating structured biological data by kernel maximum mean
  discrepancy.
\newblock {\em Bioinformatics}, 22(14):e49--e57, 2006.

\bibitem{davidson1983pointwise}
K.~R. Davidson.
\newblock Pointwise limits of analytic functions.
\newblock {\em Am math mon}, pages 391--394, 1983.

\bibitem{EppsSingleton1986}
T.W. Epps and K.J. Singleton.
\newblock An omnibus test for the two-sample problem using the empirical
  characteristic function.
\newblock {\em Journal of Statistical Computation and Simulation.},
  26(3-4):177--203, 1986.

\bibitem{Gretton2012}
A.~Gretton, K.~Borgwardt, M.~Rasch, B.~Sch\"{o}lkopf, and A.~Smola.
\newblock A kernel two-sample test.
\newblock {\em JMLR}, 13:723--773, 2012.

\bibitem{GreFukHarSri09}
A.~Gretton, K.~Fukumizu, Z.~Harchaoui, and B.~Sriperumbudur.
\newblock A fast, consistent kernel two-sample test.
\newblock In {\em NIPS}, 2009.

\bibitem{GreSriSejetal12}
A.~Gretton, B.~Sriperumbudur, D.~Sejdinovic, H.~Strathmann, S.~Balakrishnan,
  M.~Pontil, and K.~Fukumizu.
\newblock Optimal kernel choice for large-scale two-sample tests.
\newblock In {\em NIPS}, 2012.

\bibitem{Harchaoui07}
Z.~Harchaoui, F.R. Bach, and E.~Moulines.
\newblock {Testing for Homogeneity with Kernel Fisher Discriminant Analysis}.
\newblock In {\em NIPS}. 2008.

\bibitem{heathcote1972test}
CE~Heathcote.
\newblock A test of goodness of fit for symmetric random variables.
\newblock {\em Aust J stat}, 14(2):172--181, 1972.

\bibitem{heathcote1977integrated}
CR~Heathcote.
\newblock The integrated squared error estimation of parameters.
\newblock {\em Biometrika}, 64(2):255--264, 1977.

\bibitem{HoShieh06}
{H.-C.} Ho and G.~Shieh.
\newblock Two-stage {U}-statistics for hypothesis testing.
\newblock {\em Scandinavian Journal of Statistics}, 33(4):861--873, 2006.

\bibitem{hotelling1931}
H.~Hotelling.
\newblock The generalization of student's ratio.
\newblock {\em Ann. Math. Statist.}, 2(3):360--378, 1931.

\bibitem{LeSarlosSmola2013}
Q.~Le, T.~Sarlos, and A.~Smola.
\newblock {Fastfood - computing Hilbert space expansions in loglinear time}.
\newblock In {\em ICML}, volume~28, pages 244--252, 2013.

\bibitem{Lichman:2013}
M.~Lichman.
\newblock {UCI} machine learning repository, 2013.

\bibitem{Lloyd2014}
J.R. Lloyd and Z.~Ghahramani.
\newblock Statistical model criticism using kernel two sample tests.
\newblock Technical report, 2014.

\bibitem{pevny2008benchmarking}
Tom{\'a}{\v{s}} Pevn{\`y} and Jessica Fridrich.
\newblock Benchmarking for steganography.
\newblock In {\em Information Hiding}, pages 251--267. Springer, 2008.

\bibitem{Rahimi2007}
A.~Rahimi and B.~Recht.
\newblock {Random features for large-scale kernel machines}.
\newblock In {\em NIPS}, 2007.

\bibitem{RamRed15}
A.~Ramdas, S.~Reddi, B.~P{\'{o}}czos, A.~Singh, and L.~Wasserman.
\newblock On the decreasing power of kernel- and distance-based nonparametric
  hypothesis tests in high dimensions.
\newblock {\em AAAI}, 2015.

\bibitem{RedRam15}
S.~Reddi, A.~Ramdas, B.~P{\'{o}}czos, A.~Singh, and L.~Wasserman.
\newblock On the high-dimensional power of linear-time kernel two-sample
  testing under mean-difference alternatives.
\newblock {\em AISTATS}, 2015.

\bibitem{rudin1987real}
Walter Rudin.
\newblock {\em Real and complex analysis}.
\newblock Tata McGraw-Hill Education, 1987.

\bibitem{SejSriGreFuk13}
D.~Sejdinovic, B.~Sriperumbudur, A.~Gretton, and K.~Fukumizu.
\newblock Equivalence of distance-based and {RKHS}-based statistics in
  hypothesis testing.
\newblock {\em Annals of Statistics}, 41(5):2263--2291, 2013.

\bibitem{Sriperumbudur2011}
B.~Sriperumbudur, K.~Fukumizu, and G.~Lanckriet.
\newblock Universality, characteristic kernels and {RKHS} embedding of
  measures.
\newblock {\em JMLR}, 12:2389--2410, 2011.

\bibitem{SriGreFukLanetal10}
B.~Sriperumbudur, A.~Gretton, K.~Fukumizu, G.~Lanckriet, and B.~Sch{\"o}lkopf.
\newblock Hilbert space embeddings and metrics on probability measures.
\newblock {\em JMLR}, 11:1517--1561, 2010.

\bibitem{steinwart2008support}
I.~Steinwart and A.~Christmann.
\newblock {\em Support vector machines}.
\newblock Springer Science \& Business Media, 2008.

\bibitem{steinwart2006explicit}
I.~Steinwart, D.~Hush, and C.~Scovel.
\newblock An explicit description of the reproducing kernel hilbert spaces of
  gaussian rbf kernels.
\newblock {\em Information Theory, IEEE Transactions on}, 52(10):4635--4643,
  2006.

\bibitem{sun2008reproducing}
Hong-Wei Sun and Ding-Xuan Zhou.
\newblock Reproducing kernel hilbert spaces associated with analytic
  translation-invariant mercer kernels.
\newblock {\em Journal of Fourier Analysis and Applications}, 14(1):89--101,
  2008.

\bibitem{szekely2003statistics}
GJ~Sz{\'e}kely.
\newblock E-statistics: The energy of statistical samples.
\newblock Technical report, 2003.

\bibitem{ZarGreBla13}
W.~Zaremba, A.~Gretton, and M.~Blaschko.
\newblock B-test: A non-parametric, low variance kernel two-sample test.
\newblock In {\em NIPS}, 2013.

\bibitem{zhao2014fastmmd}
Ji~Zhao and Deyu Meng.
\newblock Fastmmd: Ensemble of circular discrepancy for efficient two-sample
  test.
\newblock {\em arXiv preprint arXiv:1405.2664}, 2014.

\bibitem{zinger1992characterization}
AA~Zinger, AV~Kakosyan, and LB~Klebanov.
\newblock A characterization of distributions by mean values of statistics and
  certain probabilistic metrics.
\newblock {\em Journal of Mathematical Sciences}, 59(4):914--920, 1992.

\end{thebibliography}
}
}

\newpage

\appendix

\setcounter{figure}{0}    
\renewcommand\thefigure{\thesection.\arabic{figure}}

\section{Figures}

\begin{figure}[h!]
  \centering    
  \centerline{\includegraphics[width=0.9\columnwidth]{.//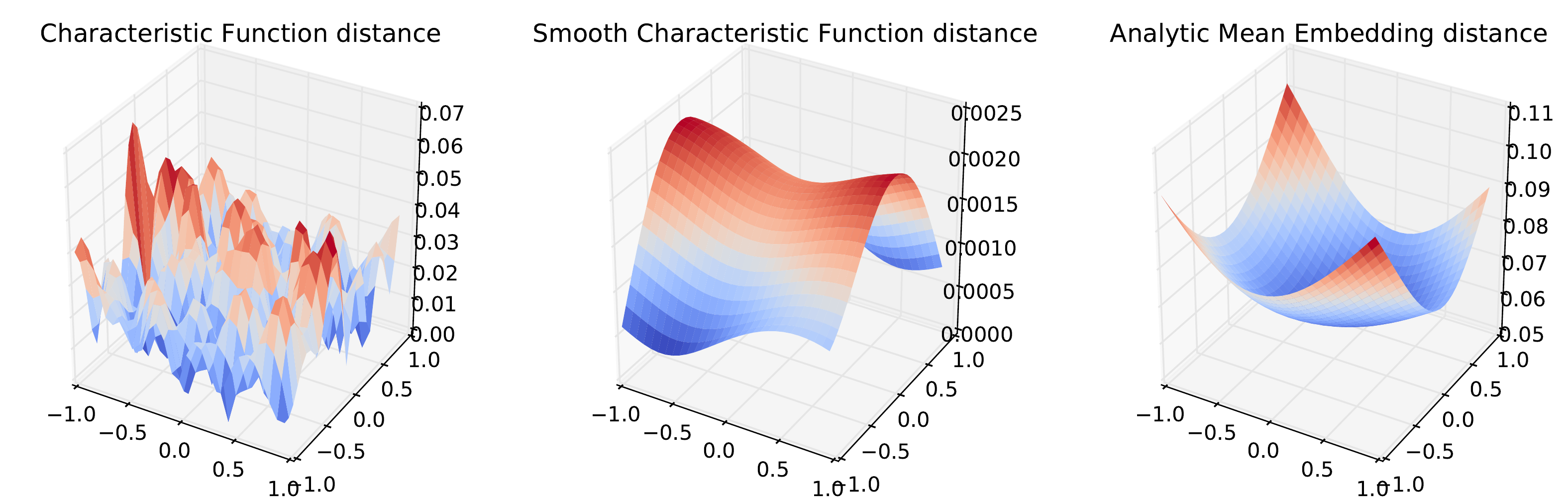}}
  \caption{\textbf{Smooth vs non-smooth.} {\bf Left}: pseudo-distance $d_{\varphi,1}(P,Q)$ which uses a single frequency $t\in\mathbf R^2$ as a function of this frequency; {\bf Middle}: $d_{\phi,1}(P,Q)$ depicted in the same way; {\bf Right}: $d_{\mu,1}(P,Q)$ which uses a single location $t\in\mathbf R^2$ as a function of this location. The measures $P,Q$ used are illustrated in Figure \ref{fig:Blobs} - these are grids of Gaussian distributions discussed in detail in Section \ref{sec:Blobs}.}
  \label{fig:var}
\end{figure}

\begin{figure}[h!]
  \centering    
  \centerline{\includegraphics[width=0.86\textwidth]{.//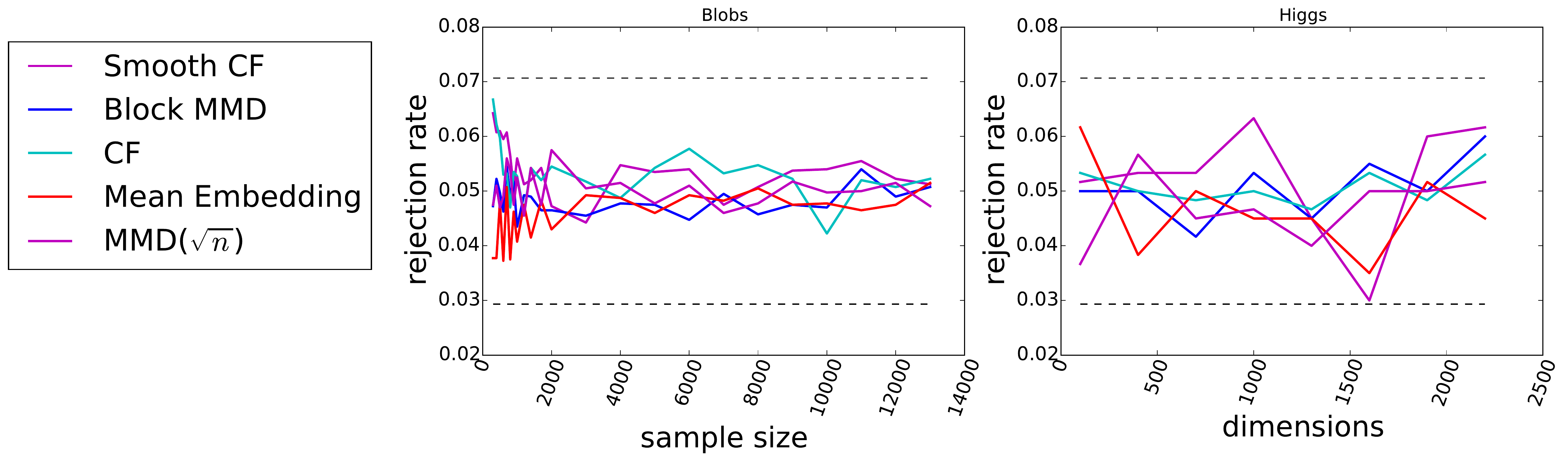}}
  \caption{Type I error of the blobs dataset {\bf (left)}  and the  dimensions dataset {\bf (right)}. 
  The dashed line is the 99\% Wald interval $\alpha\pm 2.57\sqrt{ \alpha(1-\alpha)/ 4000 }$ ($4000$ is number of repetitions) around the design test size of $\alpha=0.05$. } 
  \label{fig:H0}
\end{figure}

\section{Proofs}
\label{ap:prrofs}

\subsection*{Proof of Proposition \ref{prop:notSufficient}}
\begin{proof}
For some $I = I(\epsilon)$, there exists an interval $[-I,I]$ with measure $1-(1-\epsilon)^{\frac 1 J}$. Define $f_w(t) = 1 - w|t| $ for $w > \frac 1 I $ and zero elsewhere. By Polya's theorem, $\mathcal{A} = \{ f_w \}_{w > \frac 1 I}$  is an uncountable family of characteristic functions that are the same on the complement of $[-I,I]$, which has measure $(1-\epsilon)^{\frac 1 J}$. For $w_1 > w_2 > \frac 1 I$, $f_{w_1} \neq f_{w_2}$ in some neighborhood of $1/{w_1}$, hence the measures associated with those characteristic functions are different. The probability that all $T_i$ sit in the complement of interval $[-I,I]$ is $ \left( (1 -\epsilon)^{\frac 1 J} \right)^J = (1 -\epsilon)$ and such an event implies that  $S_{\varphi,J}^2 = 0$.    

\end{proof}

\subsection*{Proof of Theorem \ref{th:analyticMean} }

First we give a proposition that characterizes limits of analytic functions.
 
\begin{proposition}[ {{\cite[Proposition 3]{davidson1983pointwise} }}]
\label{anal}
 If $\{f_n\}$ is a sequence of real valued, uniformly bounded analytic functions on $\mathbf R^d$ converging pointwise to  $f$, then $f$ is analytic.  
\end{proposition}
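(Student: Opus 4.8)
The plan is to follow the only route that is robust here, namely a Baire--category argument in the spirit of Davidson, since uniform boundedness on $\mathbf{R}^d$ alone is \emph{not} enough to control the complex extensions uniformly. Indeed, $f_n(x)=1/(1+n^2x^2)$ is real analytic and bounded by $1$, yet converges pointwise to the indicator of $\{0\}$, which is not even continuous; so the set on which $f$ fails to be analytic may be nonempty, but it must be nowhere dense. The sharp conclusion I would aim for is therefore that $f$ is analytic on a dense open subset of $\mathbf{R}^d$ (this is what the cited proposition delivers, and it is all that is needed for $\mu_P$, whose genuine analyticity is in any case available directly by differentiating under the integral). Throughout I use that real analyticity is a local property, so it suffices to produce, for a dense open set of base points, a neighbourhood on which $f$ is analytic.

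First I would complexify. Each real analytic $f_n$ extends to a holomorphic function $F_n$ on some open neighbourhood of $\mathbf{R}^d$ inside $\mathbb{C}^d$, and $|F_n|$ agrees with $|f_n|$ on the reals but need not remain bounded off the reals. This mismatch is precisely the difficulty, and it is why one cannot simply apply the Vitali--Porter/Montel machinery globally: the radii of holomorphy and the complex bounds depend on $n$ and may degenerate.

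Next comes the heart of the argument: a Baire--category step to locate ``good'' base points. For positive integers $j,k$ I would define
\[
E_{j,k}=\{x\in\mathbf{R}^d : \text{for every } n,\ F_n \text{ is holomorphic on the polydisc } \Delta(x,1/j)\subseteq\mathbb{C}^d \text{ and } \sup_{\Delta(x,1/j)}|F_n|\le k\}.
\]
Each $E_{j,k}$ is closed. The crux is to show that $\bigcup_{j,k}E_{j,k}$ contains a dense open set, equivalently that the set of points at which no common polydisc of holomorphy with a common bound survives is nowhere dense. This is where the pointwise convergence and the uniform real bound $M$ must be combined with the rigidity of analytic functions: inside any open ball, pointwise control forces, through an analyticity estimate, that the radii of holomorphy cannot collapse except on a nowhere dense set, so some $E_{j,k}$ has nonempty interior in that ball. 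I expect this covering/non-collapse step to be the main obstacle; it is exactly the technical content of Davidson's proposition.

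Finally I would run a local Vitali--Porter argument on each $\mathrm{int}(E_{j,k})$. There the family $\{F_n\}$ is holomorphic and uniformly bounded by $k$ on a fixed polydisc, hence normal by Montel's theorem; since $f_n\to f$ pointwise on the real slice, every locally uniform subsequential limit is holomorphic and must coincide with $f$ on the reals, so in fact $F_n$ converges locally uniformly to a holomorphic function whose real restriction is $f$. Consequently $f$ is real analytic on $\mathrm{int}(E_{j,k})$, and therefore on the dense open set $\bigcup_{j,k}\mathrm{int}(E_{j,k})$, which completes the argument.
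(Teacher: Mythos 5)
You are right that something is wrong with this statement, and your counterexample $f_n(x)=1/(1+n^2x^2)$ does refute it as printed; note first, though, that the paper offers no proof to compare against --- the proposition is imported wholesale from Davidson's article --- so the only question is whether your argument stands on its own. It does not: the weakened conclusion you aim for (analyticity of $f$ on a dense open subset) is \emph{also} false, so the step you yourself flag as the crux --- that $\bigcup_{j,k}E_{j,k}$ contains a dense open set --- cannot be filled by any argument, Davidson's or otherwise. Concretely, let $W$ be a bounded continuous nowhere differentiable function on $\mathbf R$ (a Weierstrass function). Carleman's approximation theorem yields entire functions $g_n$ with $\sup_{x\in\mathbf R}|W(x)-g_n(x)|<1/n$; replacing $g_n(z)$ by $\bigl(g_n(z)+\overline{g_n(\bar z)}\bigr)/2$ keeps them entire, makes them real-valued on $\mathbf R$, and preserves the bound. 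These $g_n$ are real analytic and uniformly bounded on $\mathbf R$, and they converge to $W$ \emph{uniformly} --- a fortiori pointwise --- yet $W$, being nowhere differentiable, is analytic on no open set whatsoever (for $d>1$ take $W(x_1)$). In this example every one of your sets $E_{j,k}$ has empty interior: if one contained a ball, your own Montel/Vitali endgame would make $W$ analytic there. The failure mode is exactly the one you identified --- real-axis bounds say nothing about the complex extensions, whose polydiscs of holomorphy and bounds may degenerate \emph{everywhere}, not merely on a nowhere dense set --- and the uniform convergence in the example shows that even strengthening the mode of convergence cannot rescue the covering step. Indeed, by Carleman approximation of the continuous approximants, every bounded Baire class $1$ function on $\mathbf R$ is a pointwise limit of uniformly bounded real analytic functions, so no trace of analyticity survives under these hypotheses; whatever Davidson's Proposition 3 actually asserts, it cannot be the statement quoted here, which appears to be a transfer to the real setting of a Vitali--Porter type result that is only true on a complex domain.

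Your Montel/Vitali endgame is sound, but it needs as a \emph{hypothesis} what you tried to manufacture by Baire category: holomorphic extensions $F_n$ to a common tube $\{z\in\mathbb C^d:\|\mathrm{Im}\,z\|<\delta\}$ with a locally uniform bound there. Under that hypothesis the identity-theorem argument (all locally uniform subsequential limits agree with $f$ on $\mathbf R^d$, hence with each other) is correct and yields a true proposition, and this is also the form in which the paper's application can be repaired: in Lemma \ref{lem:analyticH} the approximants converge in RKHS norm, and for the concrete analytic kernels actually used (e.g.\ the Gaussian) every RKHS function --- and $\mu_P$ itself, as you note, by differentiation under the integral or a Morera--Fubini argument --- extends holomorphically to $\mathbb C^d$ with bounds on tubes controlled by the RKHS norm; this is the kernel-specific route of the results the paper also cites \cite{sun2008reproducing,steinwart2006explicit}. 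Finally, your parenthetical claim that dense-open analyticity would suffice downstream is doubtful as well: Lemma \ref{lem:disc} needs the zero set of a nonzero $f$ to be Lebesgue-null, and the complement of a dense open set can have positive measure (a fat Cantor set), so the paper's argument genuinely requires analyticity everywhere. In short: your refutation of the printed statement is correct and valuable, but the proposed salvage targets a false statement, and the central covering step is an unfillable gap rather than a deferrable technicality.
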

Now we characterize the RKHS of an analytic kernel. Similar results were proved for specific classes of kernels in  \cite[Theorem 1]{sun2008reproducing}, \cite[Corollary 3.5]{steinwart2006explicit}. 
\begin{lemma}
\label{lem:analyticH}
 If $k$ is a bounded, analytic kernel on $\mathbf R^d \times \mathbf R^d$, then all functions in the RKHS $\mathcal{H}_k$ associated with this kernel are analytic. 
\end{lemma}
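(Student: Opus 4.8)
The plan is to realize an arbitrary $f\in\mathcal H_k$ as a limit of finite linear combinations of feature maps and then invoke Proposition \ref{anal}. By definition the RKHS $\mathcal H_k$ is the closure of the pre-Hilbert space $\mathcal H_0=\operatorname{span}\{k(x,\cdot):x\in\mathbf R^d\}$, so I would first fix $f\in\mathcal H_k$ and choose a sequence $f_n\in\mathcal H_0$ with $\|f_n-f\|_{\mathcal H_k}\to 0$. Each $f_n=\sum_{i=1}^{m_n}\alpha_i^{(n)}k(x_i^{(n)},\cdot)$ is a finite linear combination of feature maps; since $k$ is analytic on its domain, every $k(x,\cdot)$ is analytic, and a finite linear combination of analytic functions is analytic, so each $f_n$ is analytic.

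The two remaining hypotheses of Proposition \ref{anal} --- pointwise convergence and uniform boundedness --- will both follow from the reproducing property together with boundedness of $k$. First I would record the elementary bound, valid for any $g\in\mathcal H_k$ and any $t$,
\[
|g(t)|=|\langle g,k(t,\cdot)\rangle_{\mathcal H_k}|\le \|g\|_{\mathcal H_k}\sqrt{k(t,t)}\le M\,\|g\|_{\mathcal H_k},
\]
where $M=\sup_{t}\sqrt{k(t,t)}<\infty$ by boundedness of $k$. Applying this to $g=f_n-f$ shows $\sup_t|f_n(t)-f(t)|\le M\|f_n-f\|_{\mathcal H_k}\to 0$, so $f_n\to f$ uniformly, and in particular pointwise. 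For uniform boundedness, note that a convergent sequence is norm-bounded, so $C:=\sup_n\|f_n\|_{\mathcal H_k}<\infty$; applying the same bound with $g=f_n$ gives $|f_n(t)|\le MC$ for all $n$ and all $t$.

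Thus $\{f_n\}$ is a uniformly bounded sequence of real-valued analytic functions converging pointwise to $f$, and Proposition \ref{anal} yields that $f$ is analytic. Since $f\in\mathcal H_k$ was arbitrary, every function in $\mathcal H_k$ is analytic, which is the claim.

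I expect the only genuine content to be the verification of the uniform boundedness hypothesis of Proposition \ref{anal}: this is precisely where boundedness of $k$ (rather than merely its analyticity) enters, via the reproducing-property estimate above. Everything else is the standard density of $\mathcal H_0$ in $\mathcal H_k$ and the trivial fact that finite sums of analytic functions are analytic, so I do not anticipate any serious difficulty beyond stating these steps cleanly.
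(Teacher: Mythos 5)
Your proposal is correct and follows essentially the same route as the paper: approximate $f$ by finite linear combinations of analytic feature maps, verify uniform boundedness (via boundedness of $k$ and the reproducing property) and pointwise convergence, and invoke Proposition \ref{anal}. If anything, your write-up is slightly more explicit than the paper's, which leaves the pointwise convergence and the analyticity of the approximants implicit and cites a sup-norm bound from the literature in place of your direct reproducing-property estimate.
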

\begin{proof}
Since $\mathbf R^d$ is separable then by \cite[Lemma 4.33]{steinwart2008support} Hilbert Space $\mathcal{H}_k$ is separable. By Moore-Aronszajn Theorem  \cite{berlinet2004reproducing} there exist a  set $H_0$ of linear combinations of functions $k(\cdot,x) , x \in \mathbf R^d$, which is dense in $\mathcal{H}_k$ and $\mathcal{H}_k$ is a set of functions which are pointwise limits of Cauchy sequences in $H_0$. For each $f \in \mathcal{H}_k$ let  $\{f_n\} \in \mathcal{H}_0$ be a sequence of functions converging in the Hilbert Space norm to $f$. Since $\{f_n\}$ is convergent there exists $N$ such that $\forall n>N$ $\left\Vert f_n -f \right\Vert \leq 1$. For all $n$ there exist a uniform bound on $f_n$ norm
\begin{align}
 \Vert f_n \Vert = \Vert f_n - f + f \Vert \leq \Vert f_n - f \Vert + \Vert f \Vert 
 \leq \max(1, \max_{1\leq i \leq N} \Vert f_N \Vert ) + \Vert f \Vert.
\end{align}
Since $k$ is bounded, by the \cite[Lemma 4.33]{steinwart2008support}, there exists $C$ such that for any $f \in \mathcal{H}_k$, $\Vert f \Vert_{\infty} \leq C \Vert f \Vert$. Therefore for all $n$
\begin{align}
 \Vert f_n \Vert_{\infty} \leq C \max(1, \max_{1\leq i \leq N} \Vert f_N \Vert ) + C \Vert f \Vert.
\end{align}
Finally, using  Proposition \ref{anal} we conclude that $f$ is analytic. This concludes the proof of Lemma \ref{lem:analyticH}.

\end{proof}

Next, we show that analytic functions are 'well behaved'.
\begin{lemma}
\label{lem:disc}
 Let $\mu$ be absolutely continuous measure on $\mathbf R^d$ (wrt. the Lebesgue measure).  Non-zero, analytic function $f$ can be zero at most at the set of measure 0, with respect to the measure $\mu$.
\end{lemma}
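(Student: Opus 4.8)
The plan is to first reduce the statement about $\mu$-measure to one about Lebesgue measure, and then establish the purely analytic fact that the zero set of a nonzero real-analytic function on $\mathbf R^d$ is Lebesgue-null. Since $\mu$ is absolutely continuous with respect to the Lebesgue measure $\lambda_d$, every $\lambda_d$-null set is automatically $\mu$-null, so it suffices to prove that $Z=\{x\in\mathbf R^d : f(x)=0\}$ satisfies $\lambda_d(Z)=0$ whenever $f$ is analytic and $f\not\equiv0$.

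I would prove $\lambda_d(Z)=0$ by induction on $d$. For the base case $d=1$, a real-analytic function on $\mathbf R$ that is not identically zero has only isolated zeros: the set on which $f$ and all its derivatives vanish is both open and closed in the connected domain $\mathbf R$, hence empty once $f\not\equiv0$, so each zero has finite order and is isolated. An isolated (hence at most countable) set of points is $\lambda_1$-null.

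For the inductive step I would write $x=(x',x_d)$ with $x'\in\mathbf R^{d-1}$ and apply Fubini's theorem,
\[
\lambda_d(Z)=\int_{\mathbf R^{d-1}}\lambda_1\big(\{x_d:(x',x_d)\in Z\}\big)\,dx'.
\]
For each fixed $x'$ the slice $x_d\mapsto f(x',x_d)$ is analytic in a single variable, so by the base case its zero set is $\lambda_1$-null unless the slice vanishes identically. Thus the inner integrand is zero except on $B=\{x':f(x',\cdot)\equiv0\}$, and it remains to show $\lambda_{d-1}(B)=0$. Since $f\not\equiv0$, choose a point $(a',a_d)$ with $f(a',a_d)\neq0$ and set $h(x'):=f(x',a_d)$. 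Then $h$ is analytic on $\mathbf R^{d-1}$ and nonzero, because $h(a')\neq0$; moreover $B\subseteq\{h=0\}$, since every $x'\in B$ forces $f(x',a_d)=0$. By the inductive hypothesis $\lambda_{d-1}(\{h=0\})=0$, whence $\lambda_{d-1}(B)=0$ and therefore $\lambda_d(Z)=0$.

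The main obstacle I anticipate is controlling the slices on which the one-dimensional restriction degenerates to the zero function: it is precisely the measure of this ``bad'' set $B$ of hyperplane parameters that must be bounded, and this is what drives the induction and the choice of a fixed hyperplane $x_d=a_d$ passing through a point where $f$ is nonzero. By contrast, the one-variable base case (isolated zeros via the identity theorem) is standard, and the reduction from $\mu$ to $\lambda_d$ is immediate from absolute continuity.
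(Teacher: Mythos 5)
Your proof is correct, and it takes a genuinely different (and in fact more careful) route than the paper. The paper argues in one step: the zero set of a nonzero analytic function has no limit point, hence is discrete, hence Lebesgue-null, hence $\mu$-null. That reasoning is only valid in dimension one: for $d\ge 2$ the identity theorem does not say that a zero set with a limit point forces $f\equiv 0$ (the function $f(x_1,\dots,x_d)=x_1$ is analytic, nonzero, and vanishes on an entire hyperplane, which is far from discrete). So the paper's proof, as written, really only covers $d=1$, while the lemma is invoked for functions on $\mathbf R^d$. Your argument --- reduce to Lebesgue measure by absolute continuity, prove the $d=1$ case by the identity theorem exactly as the paper implicitly does, and then handle general $d$ by Fubini slicing plus induction, with the ``bad'' set $B$ of identically-vanishing slices controlled by restricting $f$ to a hyperplane through a point where it is nonzero --- is the standard correct treatment and fills the gap. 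The only cosmetic points worth adding are that $Z=f^{-1}(\{0\})$ is closed, hence measurable, so Fubini applies, and that an isolated subset of $\mathbf R$ is countable by separability; both are routine. In short: your proof buys correctness in all dimensions at the cost of an induction, whereas the paper's buys brevity at the cost of being valid only on the line.
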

\begin{proof}
If $f$ is zero at the set with a limit point then it is zero everywhere. 
Therefore $f$ can be zero at most at a set $A$ without a limit point, which by definition is a discrete set (distance between any two points in $A$ is greater then some $\epsilon>0$). Discrete sets have zero Lebesgue measure (as a countable union of points with zero measure). Since $P$ is absolutely continuous then $\mu(A)$ is zero as well.
\end{proof}

Next, we show how to construct random distances.
\begin{lemma}
\label{main:lemma}
 Let $\Lambda$ be an injective mapping from the space of the probability measures into a space of analytic functions on $\mathbf R^d$. Define 
 \[
  d^2_{\Lambda,J}(P,Q) = \sum_{j=1}^{J} \Big|\left[\Lambda P\right](T_j) -\left[\Lambda Q\right](T_j) \Big|^2
 \]
where  $\left\{T_j\right\}_{j=1}^J$ are real valued  i.i.d. random variables from a distribution which is absolutely continuous with respect to the Lebesgue measure. Then, $d^2_{\Lambda,J}(P,Q)$ is a random metric.
\end{lemma}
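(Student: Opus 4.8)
The plan is to verify directly the four axioms of Definition \ref{rand:metric}, exploiting the fact that, for each fixed realization of $\{T_j\}_{j=1}^J$, the quantity $d_{\Lambda,J}(P,Q)=\sqrt{d^2_{\Lambda,J}(P,Q)}$ is simply the Euclidean norm in $\mathbf R^J$ (or $\mathbf C^J$, in the complex-valued case) of the vector of evaluation differences $v_P-v_Q$, where $v_P=\big([\Lambda P](T_1),\dots,[\Lambda P](T_J)\big)$. Three of the four axioms are then deterministic consequences of this representation and need no probabilistic argument: non-negativity (axiom 1) and symmetry (axiom 3) are immediate since $|a-b|^2=|b-a|^2\ge 0$, and the triangle inequality (axiom 4) holds for \emph{every} realization of the $T_j$ by the triangle inequality of the Euclidean norm applied to $v_P-v_Q=(v_P-v_R)+(v_R-v_Q)$. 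It is this square-root quantity, rather than $d^2_{\Lambda,J}$ itself, that obeys the triangle inequality, which is why the random metric is the square root.

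The whole content of the lemma therefore sits in the coincidence axiom (axiom 2). First, if $P=Q$ then $\Lambda P=\Lambda Q$ as functions, so every summand vanishes and $d_{\Lambda,J}(P,Q)=0$ surely. The interesting direction is $P\neq Q$, which I would argue as follows. Since $\Lambda$ is injective, $P\neq Q$ forces $\Lambda P\neq\Lambda Q$, so the difference $g:=\Lambda P-\Lambda Q$ is a \emph{non-zero} function, and it is analytic as a difference of analytic functions. Lemma \ref{lem:disc} then applies to $g$: its zero set has measure zero with respect to the absolutely continuous law of each $T_j$. Consequently $\mathbb P\big(g(T_1)=0\big)=0$, and since $d^2_{\Lambda,J}(P,Q)\ge |g(T_1)|^2$ we obtain $d_{\Lambda,J}(P,Q)\ge |g(T_1)|>0$ almost surely. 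A single test point already suffices here; the remaining $T_j$ only strengthen the conclusion.

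The one step needing care, and the main obstacle, is the application of Lemma \ref{lem:disc} when $\Lambda P$ is complex-valued (as for smooth characteristic functions), since that lemma is stated for real-valued analytic functions. I would handle this by writing $g=g_R+\sqrt{-1}\,g_I$ with $g_R,g_I$ real analytic; $g\neq 0$ means at least one of them, say $g_R$, is a non-zero analytic function, and then $\{g=0\}\subseteq\{g_R=0\}$, whose law under $T_1$ is zero by Lemma \ref{lem:disc}, yielding the same conclusion. With axiom 2 established this way and axioms 1, 3, 4 deterministic, all four conditions hold almost surely, so $d_{\Lambda,J}$ is a random metric. This is exactly the engine needed to deduce Theorems \ref{th:charEmb} and \ref{th:analyticMean}, by exhibiting suitable injective analytic maps $\Lambda$ (the smooth characteristic function and the mean embedding, respectively).
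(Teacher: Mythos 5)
Your proof is correct and follows essentially the same route as the paper's: injectivity of $\Lambda$ makes $\Lambda P - \Lambda Q$ a non-zero analytic function, Lemma \ref{lem:disc} shows its zero set is null under the absolutely continuous law of the $T_j$, and the remaining axioms are deterministic consequences of the Euclidean norm on $\mathbf R^J$. Your explicit treatment of the complex-valued case via real and imaginary parts fills in a detail the paper's proof leaves implicit, but it does not change the approach.
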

\begin{proof}
Let $\Lambda P$ and $\Lambda Q$ be images of measures $P$ and $Q$ respectively. We want to apply Lemma \ref{lem:disc} to the analytic function $f = \Lambda P - \Lambda Q$, with the measure $\mu = \mu_{T_i}$, to see that if $P \neq Q$ then  $f\neq 0$ a.s. To do so, we need to show that  $P \neq Q$ implies that $f$ is non-zero. Since mapping to $\Lambda$ is injective, there must exists at least one point $o$ where $f$ is non-zero. By continuity of $f$, there exists a ball around $o$ in which $f$ is non-zero.

We have shown that  $P \neq Q$ implies  $f\neq 0$ a.s. which in turn implies that  $d_{\Lambda,J}(P,Q) > 0$ a.s. If $P = Q$ then $f=0$ and $d_{\Lambda,J}(P,Q) = 0$. 

By the construction $d_{\Lambda,J}(P,Q) = d_{\Lambda,J}(Q,P)$ and for any measure $U$, $d_{\Lambda,J}(P,Q) \leq d_{\Lambda,J}(P,U) +d_{\Lambda,J}(U,Q)$ a.s. since the triangle inequality holds for any vectors in $\mathbf R^J$. 
\end{proof}

We are ready to proof Theorem \ref{th:analyticMean}.
\begin{proof}[Proof of Theorem \ref{th:analyticMean}]
 Since $k$ is characteristic the mapping $\Lambda : P \to \mu_P$ is injective. Since $k$ is a bounded, analytic kernel on $\mathbf R^d \times \mathbf R^d$, the Lemma \ref{lem:analyticH} guarantees that $\mu_P$ is analytic, hence the image of $\Lambda$ is a subset of analytic functions. Therefore, we can use Lemma \ref{main:lemma} to see that $ d_{\Lambda,J}(P,Q)^2 =  d_{\mu,J}(P,Q)^2$ is a random metric. This concludes the proof of Theorem \ref{th:analyticMean}.
\end{proof}

\subsection*{Proof of Theorem \ref{th:charEmb}}
We first show that smooth characteristic functions are unique to distributions. 
\begin{lemma}
\label{lem:meanEmb}
 If $l$ is an analytic, integrable, translation invariant  kernel with an inverse  Fourier transform strictly greater then zero and $P$ has integrable characteristic function, then the mapping
 \[
  \Lambda: P \to \phi_P 
 \]
is injective and $\phi_P $ is element of the RKHS $\mathcal{H}_l$ associated with $l$.
\end{lemma}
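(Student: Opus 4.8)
The plan is to prove the two assertions separately, using Proposition \ref{lemma:compChar} as the bridge between the convolution definition \eqref{eq:defSmoothed} of $\phi_P$ and its representation as an integral against $P$, with $f := F^{-1}l$.

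For injectivity, I would first invoke Proposition \ref{lemma:compChar} to write $\phi_P(t) = \int_{\mathbf R^d} e^{it^\top x} f(x)\, dP(x)$. Suppose $\phi_P = \phi_Q$; subtracting the two representations gives $\int_{\mathbf R^d} e^{it^\top x} f(x)\, d(P-Q)(x) = 0$ for every $t\in\mathbf R^d$. Since $l$ is integrable, $f$ is bounded, so $\nu := f\cdot(P-Q)$ is a finite signed measure, and the displayed identity says exactly that the Fourier transform of $\nu$ vanishes identically. By the uniqueness theorem for Fourier transforms of finite signed measures, $\nu = 0$. Because $f=F^{-1}l$ is strictly positive by hypothesis, the vanishing of $f\cdot(P-Q)$ forces $P-Q$ to vanish on every Borel set (apply the Hahn decomposition of $P-Q$ and use $f>0$ to conclude $\nu^+=\nu^-=0$), i.e.\ $P=Q$. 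This yields injectivity of $\Lambda$.

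For the RKHS membership, the key observation is that the convolution definition already exhibits $\phi_P$ as a mean embedding. Writing the translation-invariant kernel as $l(t,w)=l(t-w)$, Equation \eqref{eq:defSmoothed} reads $\phi_P = \int_{\mathbf R^d}\varphi_P(w)\,l(\cdot,w)\,dw$, so $\phi_P$ is the embedding into $\mathcal{H}_l$ of the complex measure with Lebesgue density $\varphi_P$. I would then justify that this is a genuine element of $\mathcal{H}_l$ by a Bochner-integral argument: each section $l(\cdot,w)$ has norm $\|l(\cdot,w)\|_{\mathcal{H}_l}=\sqrt{l(w,w)}=\sqrt{l(0)}$, which is finite and constant since a continuous translation-invariant kernel is bounded. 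Hence $\int \|l(\cdot,w)\|_{\mathcal{H}_l}\,|\varphi_P(w)|\,dw = \sqrt{l(0)}\,\|\varphi_P\|_{L^1}<\infty$ by the assumed integrability of the characteristic function, the integral converges as a Bochner integral in $\mathcal{H}_l$, and therefore $\phi_P\in\mathcal{H}_l$.

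The main obstacle is the RKHS step rather than injectivity: one must take care to identify $\phi_P$ with a Bochner integral that actually lands in $\mathcal{H}_l$ (not merely with a pointwise-defined bounded function), and it is precisely here that integrability of $\varphi_P$ is indispensable, as it is what makes the embedded measure $\varphi_P\,dw$ finite so that the norm estimate closes. Injectivity, by contrast, reduces cleanly to classical Fourier uniqueness once Proposition \ref{lemma:compChar} is in hand and the strict positivity of $F^{-1}l$ is used to cancel the factor $f$. Together these two facts give exactly what Theorem \ref{th:charEmb} needs, since membership in $\mathcal{H}_l$ combined with Lemma \ref{lem:analyticH} makes $\phi_P$ analytic, allowing Lemma \ref{main:lemma} to be applied to the injective map $\Lambda$.
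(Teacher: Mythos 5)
Your proof is correct, and it reaches both conclusions by a route that differs from the paper's in its technical machinery while sharing the same underlying ideas. For injectivity, the paper stays in the frequency/density domain: it writes $\phi_P=\phi_Q$ as an equality of convolutions $l\ast\varphi_P=l\ast\varphi_Q$, applies the inverse Fourier transform to obtain $g f_X=g f_Y$ with $g=F^{-1}l>0$, and cancels $g$ to equate the densities (using that an integrable characteristic function guarantees a density exists). You instead pass through Proposition \ref{lemma:compChar} to view $\phi_P-\phi_Q$ as the Fourier transform of the finite signed measure $f\cdot(P-Q)$, invoke uniqueness of Fourier transforms of finite signed measures, and cancel the strictly positive factor $f$ via the Hahn decomposition. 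Your version is slightly cleaner and more general in that the injectivity step never needs $P$ and $Q$ to possess densities (the integrability of $\varphi_P$ is only consumed by the RKHS-membership step, as you correctly note), whereas the paper's cancellation happens pointwise on densities. For membership in $\mathcal{H}_l$, the paper defines the bounded linear functional $f\mapsto\int\varphi_P(x)f(x)\,dx$ and applies the Riesz representation theorem plus the reproducing property to identify its representer as $\phi_P$; your Bochner-integral argument, with the norm estimate $\int\lVert l(\cdot,w)\rVert_{\mathcal{H}_l}\,\lvert\varphi_P(w)\rvert\,dw=\sqrt{l(0)}\,\lVert\varphi_P\rVert_{L^1}<\infty$ and continuity of evaluation functionals to identify the Bochner integral with the pointwise convolution, is an equivalent standard device that buys a slightly more explicit construction. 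One cosmetic caveat applies equally to both proofs: $\varphi_P$ is complex-valued in general, so the Riesz/Bochner arguments should formally be carried out in the complexification of $\mathcal{H}_l$; this does not affect the substance of either argument.
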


\begin{proof}
For the integrable characteristic function $\varphi$ we define a functional   $L : \mathcal{H}_l \to R$ given by formula 
\begin{equation}
 Lf = \int_{\mathbf R^d} \varphi(x) f(x) dx
\end{equation}
Since $L(f+g) = L(f) + L(g)$, $L$ is linear. We check that $L$ is bounded; let $B = \{ f \in \mathcal{H}_l : \parallel f \parallel \leq 1 \}$ be a unit ball in the Hilbert Space.   
\begin{align}
 \sup_{ f \in B}  |Lf|  \leq \sup_{ f \in B} \int_{\mathbf R^d} \varphi(x) f(x) dx \leq \sup_{ f \in B} \int_{\mathbf R^d} \varphi(x) \| f \| l(x,x) dx  = \int_{\mathbf R^d} \varphi(x) l(x,x) dx \leq \infty
\end{align}
By Riesz representation Theorem there exist $\phi \in H$ such that $\langle \phi, f \rangle =  \int_{\mathbf R^d} \varphi(x) f(x) dx$. By reproducing property $\phi$ is given by equation $\phi(x) = \langle \phi, l(t,)\rangle = \int_{\mathbf R^d} l(x,t) \varphi(x) dx$.
With each probability measure $P$ with an integrable characteristic function $\varphi_P$  we associate the smooth characteristic function with 
\begin{equation}
 P \to \phi_P(x) = \int_{\mathbf R^d} l(x,t) \varphi_P(x) dx
\end{equation}

We will prove that $P \to \phi_P$ is injective. We will show that , $\forall_x \phi_Q(x) = \phi_P(x)$ implies $P=Q$.  
\begin{equation}
\label{th:eq}
 \phi_Q = \phi_P \Rightarrow \int_{\mathbf R^d} l(x -t) \varphi_P(x) dx= \int_{\mathbf R^d} l(x -t) \varphi_Q(x) dx.
\end{equation}
We apply inverse Fourier transform to this convolution and get 
\begin{equation}
g(x) f_{X}(x) = f_{Y}(x) g(x)
\end{equation}
Where $g = T^{-1}l$, $f_{Y} = T^{-1}\varphi_Q$ and $f_{X} = T^{-1}\varphi_P$. Since inverse Fourier transform is injective on the space of the  integrable characteristic functions, and all $l, \varphi_P, \varphi_Q$ are integrable CFs, then application of the inverse Fourier transform does not enlarge the null space of Eq.  \eqref{th:eq}. Since $g(x)>0$, $f_{X}(x) = f_{Y}(x)$ everywhere, implying that the mapping $P \to \phi_P$ is injective. This concludes the proof of Lemma \ref{lem:meanEmb}.

\end{proof}
Next, we show that smooth characteristic function is analytic.
\begin{lemma}
\label{lem:analyticHH}
 If $l$ is an analytic, integrable kernel with an inverse  Fourier transform strictly greater then zero and $P$ has an integrable characteristic function then the smooth characteristic function  $\phi_P$ is analytic. 
\end{lemma}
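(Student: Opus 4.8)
The plan is to combine the two structural facts already established for $\phi_P$: that it lives in the RKHS $\mathcal{H}_l$, and that every function in $\mathcal{H}_l$ inherits the analyticity of the kernel. First I would invoke Lemma \ref{lem:meanEmb}, which under exactly the present hypotheses ($l$ analytic, integrable, with inverse Fourier transform strictly greater than zero, and $P$ with integrable characteristic function) shows that the smooth characteristic function $\phi_P$ is an element of $\mathcal{H}_l$. This immediately reduces the claim to showing that every element of $\mathcal{H}_l$ is analytic.

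For that reduction I would appeal to Lemma \ref{lem:analyticH}, which states that if $l$ is a bounded, analytic kernel on $\mathbf R^d \times \mathbf R^d$ then all functions in $\mathcal{H}_l$ are analytic. Analyticity of $l$ is assumed directly; the only ingredient not stated verbatim is boundedness. I would supply this from positive-definiteness: since $l$ is translation invariant, $l(x,x) = \lambda(0)$ is a finite constant, and the positive-definiteness of $l$ (equivalently, Cauchy--Schwarz in $\mathcal{H}_l$) yields $|l(x,y)| \le \sqrt{l(x,x)\,l(y,y)} = \lambda(0)$ for all $x,y$. Alternatively, writing $g = F^{-1}l > 0$, Bochner's theorem gives $\lambda(u) = \int_{\mathbf R^d} e^{iu^{\top}\omega} g(\omega)\,d\omega$, so that $|\lambda(u)| \le \int_{\mathbf R^d} g(\omega)\,d\omega = \lambda(0) < \infty$; either route shows $l$ is bounded.

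With both hypotheses of Lemma \ref{lem:analyticH} in hand, every function in $\mathcal{H}_l$ is analytic, and in particular the element $\phi_P$ is analytic, which is the desired conclusion.

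I expect the only genuine obstacle to be the boundedness verification, since the statement lists analyticity and integrability of $l$ but not boundedness explicitly; the argument above shows boundedness is automatic from positive-definiteness together with finiteness of $\lambda(0)$ (the latter already implicitly used, via the integrability of $g$, in the proof of Lemma \ref{lem:meanEmb}). Everything else is a direct chaining of the two previously proved lemmas, so no further estimates are needed.
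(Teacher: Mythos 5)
Your proof follows the paper's own argument exactly: Lemma \ref{lem:meanEmb} places $\phi_P$ in $\mathcal{H}_l$, and Lemma \ref{lem:analyticH} makes every element of that RKHS analytic. Your additional verification that $l$ is bounded (via positive-definiteness together with integrability of $F^{-1}l$) supplies a hypothesis of Lemma \ref{lem:analyticH} that the paper leaves implicit, so the proposal is, if anything, slightly more complete than the original.
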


\begin{proof}
 By lemma \ref{lemma:compChar}, all functions in the  RKHS associated with $l$ are analytic and by \ref{lem:meanEmb} $\phi_P$ is an element of this RKHS.
\end{proof}

We are ready to proof Theorem \ref{th:charEmb}.

\begin{proof}[Proof of Theorem \ref{th:charEmb}]
 Since $l$ is an analytic, integrable kernel with an inverse  Fourier transform strictly greater then zero then by the Lemma \ref{lem:meanEmb} the mapping $\Lambda : P \to \phi_P$ is injective and $\Lambda(P)$ is an element of the RKHS associated with $l$.   The Lemma \ref{lem:analyticHH} shows that $\mu_P$ is analytic. Therefore we can use Lemma \ref{main:lemma} to see that $ d_{\Lambda,J}(P,Q)^2 =  d_{\phi,J}(P,Q)^2$ is a random metric.
This concludes the proof of Theorem \ref{th:charEmb}
\end{proof}

\paragraph{Proof of Lemma \ref{lemma:compChar}}

\begin{proof}
By  Fubini's theorem we get 
\begin{align*}
\phi_{P}(t) &=  \int_{\mathbf R^d} \varphi_{P}(t-w) f(w) dw\\
&=  \int_{\mathbf R^d} \left( \int_{\mathbf R^d} e^{i(t-w)^{\top}x}  dP(x) \right) f(w) dw\\
&= \int_{\mathbf R^d} e^{it^{\top}x} \left( \int_{\mathbf R^d} e^{-iw^{\top}x} f(w) dw \right) dP(x) \\
&= \ev [e^{it^\top X} F f(X)].
\end{align*}
Use of Fubini's theorem is justified, since the iterated integral is finite \cite{rudin1987real}[Theorem 8.8 b] i.e.
\begin{align*}
 \int_{\mathbf R^d}&  \int_{\mathbf R^d} |e^{i(t-w)^{\top}x}  f(w)|dP(x)  dw \\
 &= \int_{\mathbf R^d} |f(w)|  \int_{\mathbf R^d} 1 dP(x) dw < \infty.
\end{align*}

\end{proof}

\paragraph{Proof of Proposition \ref{prop:Hotelling}  }

\begin{proof}
The probability space of random variables $\{ T_j \}_{1 \leq j \leq J}$ and $\{ X_i\}_{1 \leq i \leq n}$ is a product space i.e sequence of  $T_j$'s is defined on the space $(\Omega_1, \mathcal F_1 , P_1)$ and the sequence of $X_i$'s is defined on the space $(\Omega_2, \mathcal F_2 , P_2)$. We will show that for almost all $\omega \in \Omega_1$, $S_n$ converges to $\chi^2$ distribution with $J$ degrees of freedom. We define 
\begin{equation}
 Z_i^{\omega} = ( k(X_i,T_1({\omega})) - k(Y_i,T_1(\omega)), \cdots, k(X_i,T_J(\omega)) - k(Y_i,T_J(\omega)) )\in \mathbf \mathbf R^J.
\end{equation}
If there exist $a \neq b$, such that $T_a(\omega) = T_b(\omega)$, then we set  $Z_i^{\omega}=0$. Otherwise, if  $\ev  Z_i^{\omega} = 0$ then $\sqrt n W_n^{\omega} = \frac 1  n \sum_{i=1}^n Z_i^{\omega}$ converges to a multivariate Gaussian vector with  covariance matrix $\Sigma^{\omega} = \ev Z_i^{\omega} (Z_i^{\omega})^{T}$ (the variance $Z_i^{\omega} $ is finite so we use standard multivariate CLT). Therefore $\lim_{n \to \infty} n (W_n^{\omega})^{\top} (\Sigma_n^{-1})^{\omega}  W_n^{\omega}$  has asymptotically $\chi^2$ distribution with $J$ degrees of freedom (by the CLT and Slutsky's theorem). Consider
\begin{equation}
 d_{\mu,J}^{\omega}(P,Q)  = \left( \frac{1}{J} \sum_{j=1}^{J} \left| \mu_P(T_j(\omega)) - \mu_Q(T_j(\omega))  \right| ^2 \right)^{0.5}. 
\end{equation}
If $d_{\mu,J}^{\omega}(P,Q) = 0$ then for all $j$, $(\mu_P(T_j(\omega)) = \mu_Q(T_j(\omega))$, which implies that $\ev Z_i=0$.

If $\ev Z_i^{\omega} \neq 0$ then  
\begin{equation}
P(S_n^{\omega}  > r) = P\left((W_n^{\omega})^{\top} (\Sigma_n^{-1})^{\omega}   W_n^{\omega} - \frac r n > 0 \right) \to 1.
\end{equation}
To see that, first we show that $(\Sigma_n^{-1})^{\omega}$ converges in probability to the positive definite  matrix  $(\Sigma^{-1})^{\omega}$. Indeed, each entry of the matrix $\Sigma_n^{\omega}$ converges to the matrix $\Sigma^{\omega}$, hence entires of the matrix $(\Sigma^{-1})^{\omega}$, given by a continuous function of the entries of $\Sigma^{\omega}$, are limit of the sequence $(\Sigma_n^{-1})^{\omega}$. Similarly $ W_n^{\omega}$  converges in probability to the vector $W^{\omega}$. Since $(W^{\omega})^{\top} (\Sigma^{-1})^{\omega}   W^{\omega} = a^{\omega} >0$ ($(\Sigma^{-1})^{\omega}$ is positive definite), then $(W_n^{\omega})^{\top} (\Sigma_n^{-1})^{\omega}   W_n^{\omega} $, being a continuous function of  the entries of $W_n^{\omega}$ and $ (\Sigma_n^{-1})^{\omega}$,  converges to $a^{\omega}$. On the other hand $\frac r n$ converges to zero and the proposition follows. Finally since $d_{\mu,J}^{\omega}(P,Q) >0$ almost surely then $\ev Z_i^{\omega} \neq 0$ for almost all $\omega \in \Omega_1$.

We have showed that the proposition hold for almost all $\omega$. Indeed it does not hold if it happens that for some $a \neq b$,  $T_a(\omega) = T_b(\omega)$ or $d_{\mu,J}^{\omega}(P,Q)=0$ for $P \neq Q$. But both those events have zero measure.

\end{proof}

\paragraph{Proof of Proposition \ref{prop:Hotelling2} }
The poof is analogue to the proof of the Proposition \ref{prop:Hotelling}.

\section{Other tests}
\label{sec:otherTests}

\subsection{Quadratic-time MMD test}
\label{sec:mmd}
For two measures $P$, $Q$ the population $MMD$ can be written as 
\[
 MMD(P,Q)^2 = \int k(x,x') dP(x) dP(x') - 2 \int k(x,y) dP(x) dP(y) + \int k(y,y') dP(y) dP(y').
\]

An MMD-based test uses as its statistic an empirical estimator of the squared population MMD, and rejects the null if this is larger than a threshold $r_\alpha$ corresponding to the $1-\alpha$ quantile of the null distribution. The minimum variance unbiased estimator of MMD is
\begin{align*}
MMD_n^2
&=
\frac{1}{\binom{n}{2}}\sum_{i \neq j} h(X_i,X_j,Y_i,Y_j),\\
 h(x,x',y,y') &= k(x,x') + k(y,y') - k(x,y')-k(x',y). 
\end{align*}
The test threshold $r_\alpha$ is costly to compute. The null distribution of $MMD^2_n$ is an infinite weighted sum of chi-squared random variables, where the weights are  eigenvalues of the kernel with respect to the (unknown) distribution $P$. A bootstrap or permutation procedure may be used in obtaining consistent quantiles of the null distribution, however the cost is  $O(b_n n^2)$ if we have $b_n$ permutations and $n$ data points ($b_n$ is usually in the hundreds, at minimum). As an alternative consistent procedure, the eigenvalues of the joint gram matrix over samples from $P$ and $Q$ may be used in place of the population eigenvalues; the fastest quadratic-time MMD test uses a gamma approximation to the null distribution, which works well most of the times, but has no consistency guarantees \cite{GreFukHarSri09}. 

\subsection{Sub-quadratic time MMD test}\label{sec:alt}

An alternative to the quadratic-time MMD test is a B-test (block-based test): the idea is to break the data into blocks, compute a quadratic-time statistic
 on each block, and average  these quantities to obtain the test statistic.
More specifically, for an individual block, laying on the main diagonal and starting at position $(i-1)B+1$, the  statistic $\eta(i)$ is calculated  as
\begin{equation}
\eta(i) = \frac{1}{\binom{B}{2}} \sum^{iB}_{a=(i-1)B+1} \sum^{iB}_{b=(i-1)B+1 \neq a} h(X_a,X_b,Y_a,Y_b).
\end{equation}
The overall test statistic is then
\begin{equation}\label{eq:b-statistic}
  \eta = \frac B n \sum_{i=1}^{\frac n B} \eta(i).
\end{equation}


 The choice of $B$ determines computation time - at one extreme is the linear-time MMD suggested by \cite{Gretton2012,GreSriSejetal12} where we have $n/2$ blocks of size $B=2$, and at the other extreme is the usual full MMD with $1$ block of size $n$, which requires calculating the test statistic on the whole kernel matrix in quadratic time. 
In our case, the size of the block remains constant as $n$ increases, and is greater than 2. This is very similar to the case proposed by \cite{ZarGreBla13}, and the consistency of the test is not affected. 

B-test of \cite{ZarGreBla13} assumes that $B\to\infty$ together with $n$, which implies that the statistic $\hat\eta$ defined in \eqref{eq:b-statistic} under the null distribution satisfies
\begin{equation}\label{eq:btest_null}
 \sqrt{nB}\hat\eta \overset{D}{\to} \mathcal N\left(0, 4\sigma_0^2\right),
\end{equation}
for asymptotic variance $\sigma_0^2=\mathbb{E}_{XX'}k^{2}(X,X')+\left(\mathbb{E}_{XX'}k(X,X')\right)^{2}-2\mathbb{E}_{X}\left[\left(\mathbb{E}_{X'}k(X,X')\right)^{2}\right]$ that can easily be estimated directly or by considering the empirical variance of the statistics computed within each of the blocks. Note that the same asymptotic variance $\sigma_0^2$ is obtained in the case of a quadratic-time statistic \cite{Gretton2012} -- albeit convergence rate being a faster $O(1/n)$ in that case. Indeed, \eqref{eq:btest_null} is obtained directly from the leading term of the variance of each block-based statistic being $\frac{4\sigma_0^2}{B^2}$. Therefore, the p-value for B-test is approximated as $\Phi\left(-\frac{\sqrt{nB}\hat\eta}{2\hat\sigma_0}\right)$, where $\Phi$ is the standard normal cdf. When $B$ remains constant as $n$ increases, it can be shown that the variance of each block-based statistic is exactly $\frac{4\sigma_0^2}{B(B-1)}$, and thus we obtain by CLT that
\begin{equation*}
 \sqrt{n}\hat\eta \overset{D}{\to} \mathcal N\left(0, \frac{4\sigma_0^2}{B-1}\right).
\end{equation*}
Therefore, a slight change to p-value needs to be applied when $\sigma_0^2$ is estimated directly: $\Phi\left(-\frac{\sqrt{n(B-1)}\hat\eta}{2\hat\sigma_0}\right)$. If, however, one simply uses the empirical variance of the individual statistics computed within each block, the procedure is unaffected.

\section{Parameters Choice}
\label{sec:params}

We split our data set into two disjoint sets, training and testing set, and optimize parameters on the training set. We didn't come up with an automated testing procedure, instead we plotted the p-values of tests for different scales. The figure $\ref{fig:choice}$ presents such a plot for three different tests. The p-values were obtained by running the test several times (20 to 50) for each data scaling $\lambda$. Note that in the case of simulations we just generated new training dataset for each repetition for a given data scaling. For the music dataset we generated new noises for each scaling and for the Higgs dataset we have used bootstrap. The last method is applicable to real life problems i.e. we split our data into training and test part and then bootstrap from the training part.   
\begin{figure*}
\label{fig:choice}
  \centering    
  \centerline{\includegraphics[width=\textwidth]{.//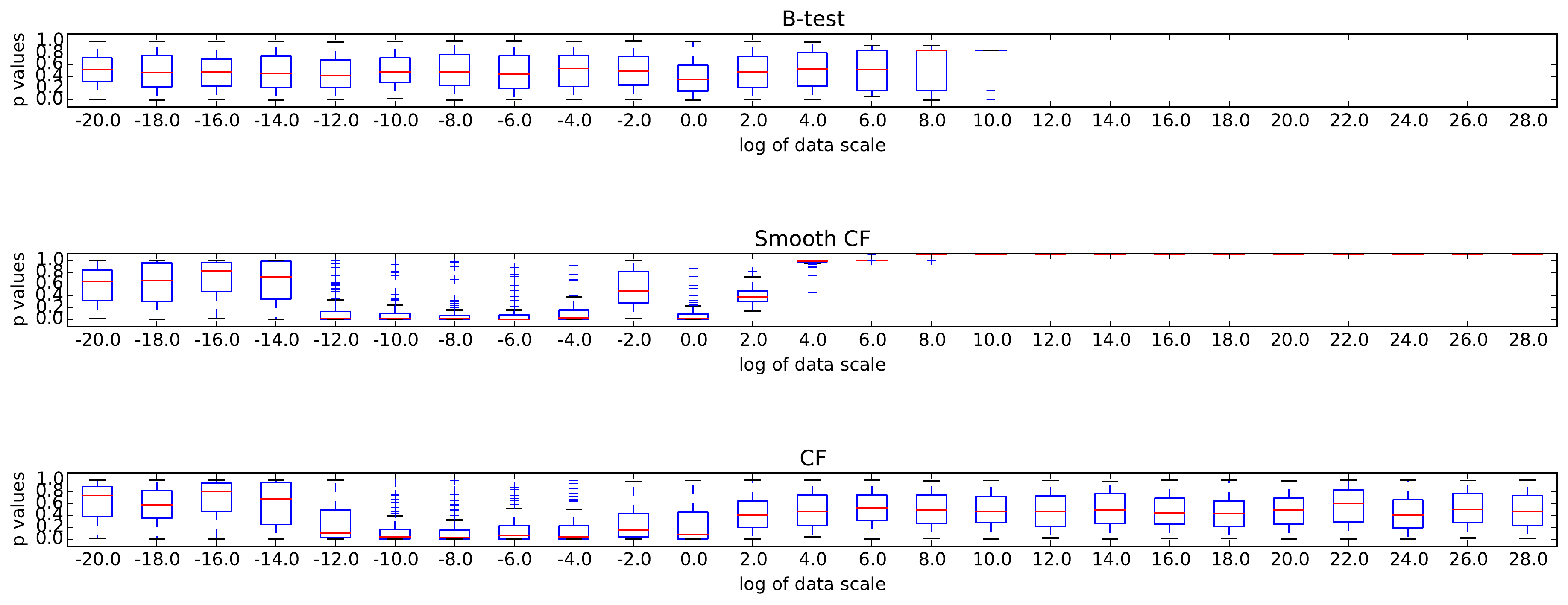}}
  \caption{Box plot of p-values used for parameter selection. The $X$ axis shows the binary logarithm of the scaling parameter applied to data. We have chosen the scaling with the smallest median. If the medians were similar we have used the one that had less outliers and was surrounded with other scalings with small p-value. In the example we have chosen $2^0.0$ scaling for the B-test, $2^{-8.0}$ scaling for the Smoothed CF and $2^{-10.0}$ scaling for the CF test. }
\end{figure*}

\end{document}